\newcommand{\vecG}{\mathbf{g}}
\newcommand{\vecX}{\mathbf{x}}
\newcommand{\vecY}{\mathbf{y}}
\newcommand{\vecT}{\boldsymbol{\theta}}
\newcommand{\llos}{\mathcal{L}}
\newtheorem{theorem}{Theorem}
\newtheorem{lemma}[theorem]{Lemma}
\newtheorem*{theorem*}{Theorem}
\newtheorem*{lemma*}{Lemma}
\renewcommand\@ACM@checkaffil{%
  \if@ACM@instpresent\else
    \ClassWarningNoLine{\@classname}{No institution present for an affiliation}%
  \fi
  \if@ACM@citypresent\else
    \ClassWarningNoLine{\@classname}{No city present for an affiliation}%
  \fi
  \if@ACM@countrypresent\else
    \ClassWarningNoLine{\@classname}{No country present for an affiliation}%
  \fi
}
\begin{document}

\title{Never Skip a Batch: Dense Learning of Temporal GNNs via Adaptive Pseudo-Supervision}

\author{Alexander Panyshev}
\affiliation{%
  \institution{Risk AI Research Lab}
  \institution{Moscow Independent Research Institute of Artificial Intelligence}
  \city{Moscow}
  \country{Russia}}
\email{panyshev.as@phystech.edu}

\author{Dmitry Vinichenko}
\affiliation{%
  \institution{Risk AI Research Lab}
  \city{Moscow}
  \country{Russia}}
\email{vinichenko.dmitry@gmail.com}

\author{Oleg Travkin}
\affiliation{%
  \institution{Risk AI Research Lab}
  \city{Moscow}
  \country{Russia}}
\email{travkin.o.i@gmail.com}

\author{Roman Alferov}
\affiliation{%
  \institution{Risk AI Research Lab}
  \city{Moscow}
  \country{Russia}}
\email{alferov.r@gmail.com}

\author{Alexey Zaytsev}
\affiliation{%
\institution{Risk AI Research Lab}
}
\affiliation{%
  \institution{Applied AI Institute}
  \city{Moscow}
  \country{Russia}}
\affiliation{%
  \institution{Beijing Institute of Mathematical Sciences and Applications}
  \city{Beijing}
  \country{China}}
\email{likzet@gmail.com}

\renewcommand{\shortauthors}{Alexander Panyshev, Dmitry Vinichenko, Oleg Travkin, Roman Alferov, \& Alexey Zaytsev}

\begin{abstract}
 Temporal graph networks suffer from irregular supervision in real-world dynamic graphs, as most minibatches contain few labeled events.
  The lack of labels leads to high-variance gradient updates and, consequently, slow wall-clock convergence. 
  To constructively reduce sparsity, our Moving-Averaged Labels (MAL) assigns soft pseudo-targets based on past supervised signals using a running label distribution while leaving the loss and the model architecture unchanged. Thus, supervision gaps are replaced with informative signals independent of a temporal graph model and the message passing or memory components used. 
  Theoretical analysis supports our insight that aggregating historical supervision into moving-average targets reduces stochastic gradient variance, yielding faster convergence under mild assumptions. 
  Experimentally, for TGNv2 and DyRepv2 (our modification of DyRep) models, MAL boosts predictive performance, establishing a new SOTA, and improves time-to-accuracy (on average 6\(\times \) faster to reach the top score) for a common suite of Temporal Graph Benchmark datasets.
\end{abstract}

\begin{CCSXML}
<ccs2012>
   <concept>
       <concept_id>10003752.10003809.10003635.10010038</concept_id>
       <concept_desc>Theory of computation~Dynamic graph algorithms</concept_desc>
       <concept_significance>500</concept_significance>
       </concept>
   <concept>
       <concept_id>10010147.10010257.10010282.10010284</concept_id>
       <concept_desc>Computing methodologies~Online learning settings</concept_desc>
       <concept_significance>500</concept_significance>
       </concept>
 </ccs2012>
\end{CCSXML}

\ccsdesc[500]{Theory of computation~Dynamic graph algorithms}
\ccsdesc[500]{Computing methodologies~Online learning settings}

\keywords{Temporal Graph Networks, Pseudo-Labeling, Training Efficiency, Dynamic Graph Learning, Gradient Sparsity}


\maketitle

\section{Introduction}

Temporal graphs represent evolving interactions between entities over time.
Their models capture complex patterns, improving predictions in diverse applications spanning from recommendation systems to financial fraud detection~\cite{deng2019learning, song2019session, zhao2019t}.


The current state-of-the-art models for these problems are Temporal Graph Networks (TGNs)~\cite{tgn_icml_grl2020,tjandra2024tgnv2}. 
However, even for TGNs, the training is often impeded by the sparsity of supervision signals. 
Node-level labels (e.g., explicit user preferences or interactions) appear irregularly, leaving many time steps unlabeled.
Current TGN implementations process batches in two modes: full training steps for batches with supervision labels and memory-related state updates for others. 
This creates an efficiency bottleneck as labeled batches constitute less than $2\%$ of interactions for many applied problems~\cite{huang2023temporal}.
As a result, models skip parameter updates for large portions of the data, slowing convergence.

In most temporal node prediction datasets, the target is to predict interaction types, such as a music genre, a subreddit, or a token~\cite{huang2023temporal}, within a specified time frame.
Such interactions can be viewed as a realization of a marked temporal point process, such as a non-homogeneous Poisson process, with a dynamics governed by a slowly changing latent state of an object~\cite{cai2018modeling}.
Although the latent state may experience changes, mostly it remains stable for long intervals, gradually drifting in time~\cite{klenitskiy2024does,li2024sequential}. 
Such slow evolution provides temporal consistency that models can leverage, from temporal-neighbor aggregation~\cite{trivedi2018dyrep} to Temporal Graph Networks with learned memory~\cite{tgn_icml_grl2020}, which achieved state-of-the-art results through learned memory modules.

\begin{figure}
    \centering
    \includegraphics[width=0.95\linewidth]{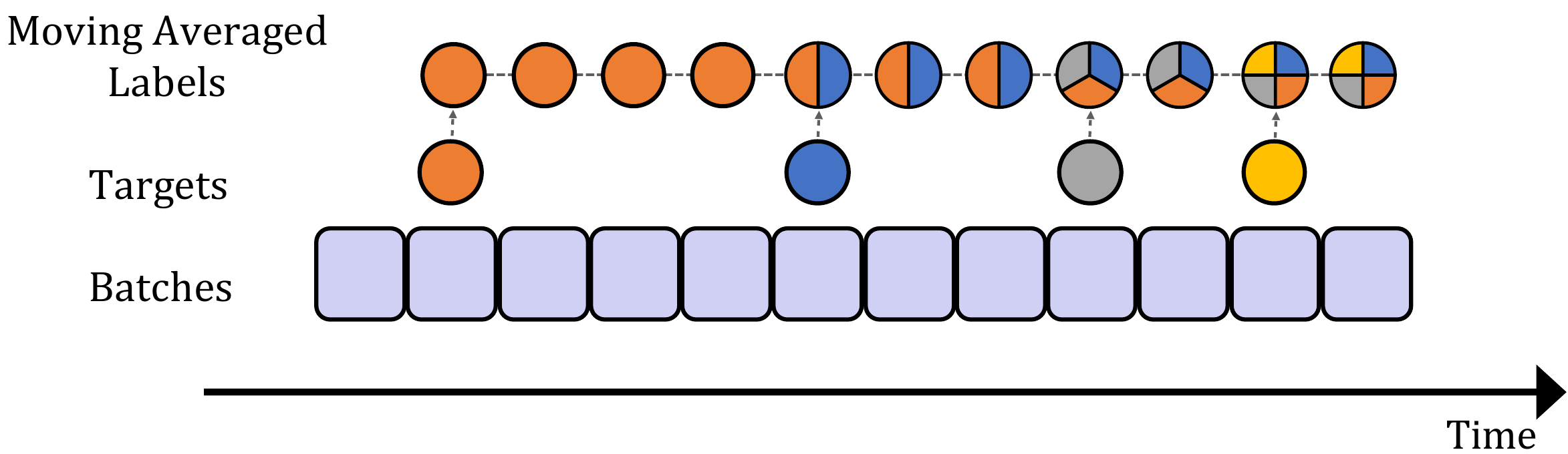}
    \caption{Our training with historical pseudo-labels. We extend beyond the vanilla method's  sparse real ground truth supervision (Targets, the bottom row of circles) by generating pseudo-labels derived from historical label patterns (Historical targets, MAL, the top row of circles), enabling more comprehensive training data utilization.}
    \label{fig:training_pipelines}
\end{figure}

Given the slow dynamics of node preferences, we propose a history-based pseudo-labeling scheme, inspired by the moving average labels (MAL) idea, that turns supervision-free batches into informative training signals, enabling faster training while maintaining model quality for TGNs. 
Formally, the training pseudo-labels for MAL are the exponentially weighted recent targets. 
The MAL operates orthogonally to model architecture choices and introduces no additional parameters. By augmenting the standard cross-entropy loss with pseudo-labeled batches, we transform previously idle computation into productive training steps. 
Our work shifts the paradigm from "train only when supervised" to "always train, intelligently extrapolate," offering a principled solution to gradient sparsity in dynamic graph learning.  
We also compare MAL with an alternative pseudo-label generation strategy, Persistent Forecast, which uses only the most recently observed label.


Comprehensive experiments on the Temporal Graph Benchmark (TGB)~\cite{huang2023temporal} validate our approach. Our experiments demonstrate consistent improvements in model quality across all four benchmark datasets, with NDCG@10 gains ranging from 3\% to 17\% compared to vanilla training. Simultaneously, our method dramatically accelerates the convergence, reducing time-to-target performance on average by 6\(\times\) across datasets.

To summarize, the main claims of this paper are: 
\begin{itemize}
    \item \textbf{History-based pseudo-labeling approach.} We propose a pseudo-labeling based on Exponential History Moving Averaging (MAL) suitable for temporal graphs, reducing label sparsity.
    \item \textbf{Proof of better convergence given MAL.} Under the constant user preferences assumption, we prove faster convergence for history-based aggregation of labels with the theoretical increase of speed $\min(h, k)$,
    where $h$ is the aggregated history length and $k$ is the number of possible interactions associated with a user node.
    \item \textbf{TGN architectures equipped with MAL.} 
    We implemented MAL for training both SOTA TGNv2~\cite{tjandra2024tgnv2} and a better-performing variant of the DyRep~\cite{trivedi2018dyrep} model, which we call DyRepv2. Experimental evidence shows consistent improvements with MAL: training time reduces on average by 6 times, and target quality metrics improve compared to vanilla training and alternative Persistent Forecast aggregation strategy across both architectures and all considered benchmark datasets from TGB.
    \item \textbf{Comparison with alternative training strategies and ablation studies.}
    We benchmark MAL against two related ways of training, label smoothing and a self-supervised (SSL) next-edge-prediction and show that MAL performs better. We further ablate MAL's additional hyperparameters, the noise scale $\gamma$ and the aggregation window $w$, and probe its robustness under controlled label sparsity and temporal label shuffling, confirming that the gains stem from temporal consistency rather than from regularization alone.
\end{itemize}

\section{Related work}
\label{sec:related_work}

Temporal graphs have attracted significant attention due to their importance in real-world problems involving gradual developments in architectures and loss functions.
The paths taken reflected peculiarities of such data, while mostly focusing on expanding ideas from graph neural network models.

\paragraph{Temporal graph node classification}

Early approaches like TGAT~\cite{xu2020tgat} used temporal attention for neighbor aggregation, while JODIE~\cite{kumar2019predicting} learned coupled user-item embeddings via recurrent updates. The introduction of Temporal Graph Networks (TGNs)~\cite{tgn_icml_grl2020} with memory modules and continuous-time message passing marked a significant advance, enabling state-of-the-art performance on dynamic tasks. TGNv2~\cite{tjandra2024tgnv2} further improved expressivity by encoding node identities, addressing limitations in capturing persistent patterns. Recent work explores transformer architectures~\cite{yu2023towards} and lightweight MLP-based models~\cite{cong2023graphmixer}, balancing accuracy and efficiency.

\paragraph{Sparsity of labels}
Scarcity of available labels appears even in regular graphs with no temporal component~\cite{zhan2021mutual}.
However, introducing pseudo-labels provides limited benefits for most existing methods, as methods suffer from information redundancy and noise in pseudo-labels~\cite{li2023informative}.
Moreover, these methods introduce a new training stage that uses pseudo-labels generated by a trained model, increasing overall training time and the number of parameters to select from.
Label propagation through nodes is another approach~\cite{zhu2005semi}.
While it enables a better introduction of connections between nodes, more advanced methods are still required to integrate them into graph convolutional networks~\cite{wang2020unifying}.

\paragraph{Generation of pseudo-labels}

Pseudo-labeling techniques have been widely adopted to address label scarcity. Semi-supervised approaches~\cite{lee2013pseudo, xie2020self} generate labels via self-training but suffer from confirmation bias in low-supervision regimes~\cite{arazo2020pseudo}. Temporal knowledge graph (TKG) methods~\cite{han2023tkgf} interpolate entity distributions or missing facts but focus on triple completion, not node-level affinity prediction.
Our key innovation lies in exploiting intrinsic temporal consistency through lightweight aggregation (e.g., moving averages), inspired by time series forecasting~\cite{hyndman2018forecasting}. Unlike model-dependent pseudo-labeling~\cite{li2023informative}, our method requires no auxiliary parameters, using historical interaction patterns to guide training during label-scarce periods. This aligns with streaming learning principles where historical baselines mitigate concept drift~\cite{ma2021dynaboard}.

Previous work has also considered the dynamics of user preferences~\cite{klenitskiy2024does} and their temporal evolution across a broader range of applied problems~\cite{li2024sequential}. 
In both works, the authors conclude that across different datasets, the dynamics principle varies: some problems exhibit near-constant user preferences, which justifies label propagation and smoothing under true-label sparsity.

\paragraph{Convergence for SGD.}
The convergence speed of Stochastic Gradient Descent (SGD) is fundamentally constrained by the variance of stochastic gradients, especially in the presence of label sparsity or noise. Classical results~\cite{shalev2014understanding,Bubeck2015} show that, under natural assumptions, SGD converges at a rate of $ \mathcal{O}(\sigma^2 / T)$, where $T$ is the number of iterations, and $\sigma^2$ is the gradient variance. 
While approaches such as SVRG~\cite{johnson2013accelerating} and Adam~\cite{kingma2015adam} target variance reduction through optimization, relatively little attention has been paid to label-level variance reduction. So, empirically label smoothing and pseudo-labeling methods address supervision sparsity by generating soft targets based on model predictions or historical trends. In a more principled way, different problem statements appear in the literature, with the sequential nature of the data ignored and only a scarce theoretical analysis of convergence speed and possible quality improvements considered.

\paragraph{Research gap}

Current temporal graph methods address sparsity via architectural changes~\cite{tjandra2024tgnv2} or sampling~\cite{cong2023graphmixer}, neglecting pseudo-label-driven acceleration. While CAW-N~\cite{wang2021caw} and similar methods use temporal walks for induction, they introduce sampling overhead and require careful tuning of a pseudo-label generation model. GraphMixer~\cite{cong2023graphmixer} improves efficiency but remains supervision-bound. 
Our work bridges this gap by demonstrating that simple temporal aggregates—requiring no new parameters—transform idle batches into productive training steps, leveraging temporal consistency to achieve faster convergence with potential quality gains, from both empirical and theoretical perspectives.

\section{Method}
\label{sec:method}

\subsection{General pipeline}
Let \( G = (V, E) \) be a temporal graph with vertices $V$ and temporal interaction edges $E$.
Each timestamped edge \( e \) has attributes \( (u, v, t, \mathbf{f}_e) \),
where \( u, v \in V \) are source and destination nodes, \( t \in \mathbb{R}^+ \) is a timestamp, and \( \mathbf{f}_e \in \mathbb{R}^d \) is a vector of edge features of dimension $d$. 

Thus, edges appear and vanish with time, making their prediction a vital problem.
Formally, for a subset of nodes \( V' \subseteq V \), \(|V'| = n \) (e.g., users), we predict time-varying affinity toward other nodes or categories (e.g., items, music genres). 
The target \( \mathbf{y}_t^{(v)} = (y_{t, 1}^{(v)}, \ldots, y_{t, n}^{(v)})\in \Delta^n \) for node \( v \in V' \) is then an \( n \)-dimensional probability vector representing normalized preferences at time $t$ where \(\Delta^n = \{\vecY: \sum y_i = 1, y_i \geq 0 \}\) is an $n$-dimensional simplex (as defined in Section~\ref{sec:sgd_convergence}).
Each element \( y_{t,i}^{(v)} \geq 0 \) quantifies the affinity of \( v \) toward the \( i\)-th category at time \( t \). 
These targets are observed irregularly, with most vectors lacking ground-truth affinities.    

During training, the edges are processed chronologically and divided into sequential batches \( \{B_1, \ldots, B_T\} \), where each batch \( B_t \) contains a fixed number \( N \) of consecutive edges \( \{e_{t_1}, \ldots, e_{t_N}\} \) that correspond to time moment $t$.
As targets for a single temporal edge are rarely available, batches also contain sparse labels. This sparse supervision creates the fundamental challenge we address: labeled batches may represent less than 2\% of interactions.
 
In the Temporal Graph Network framework, batch processing operates in two distinct modes depending on whether supervision targets are present in the batch.

\begin{figure*}[t]
    \centering
    \includegraphics[width=\textwidth]{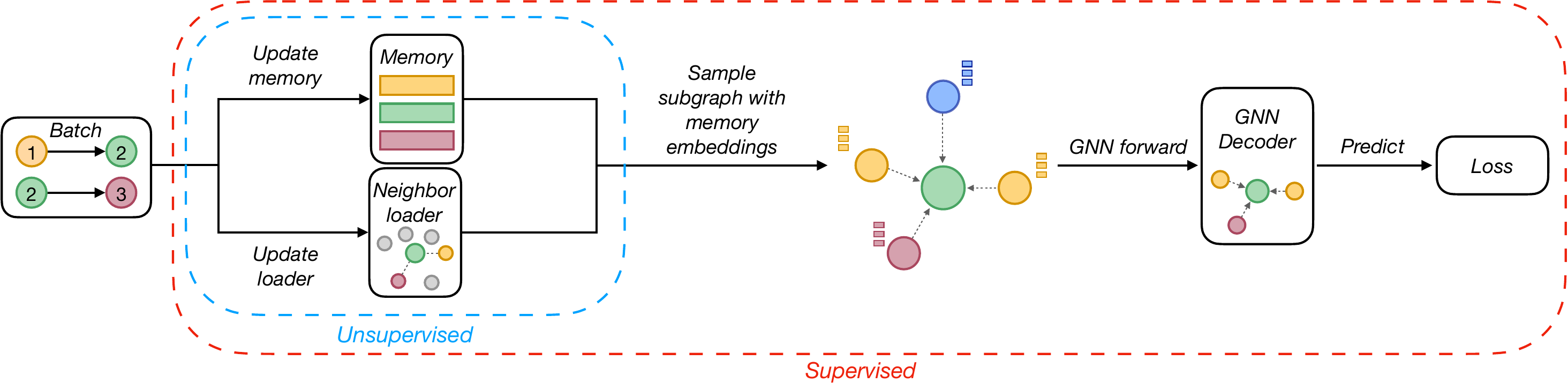}
    \vspace{-20pt}
    \caption{Comparison of batch processing pipelines: The unsupervised pipeline (blue dashed box) performs memory and neighbor loader updates only, while the supervised pipeline (red dashed box) encompasses these steps and extends them with subgraph sampling, GNN processing, and loss computation for model training.}
    \label{fig:batch_with_targets}
\end{figure*}

\paragraph{Unsupervised batch processing.}  
When a batch \( B_t \) arrives without supervision targets, the model performs only memory updates, without any gradient-based training. Temporal edges in \( B_t \) are used to update node memories by aggregating interaction history. The neighbor sampler is also refreshed to incorporate these new interactions for future subgraph construction.  
Since no ground-truth labels are available, no loss is computed, and model parameters remain unchanged during this step.

\paragraph{Supervised batch processing.}  
When \( B_t \) contains supervision targets \( \{\mathbf{y}_t^{(v)}\} \), the model performs a full training step, which includes both memory updates and gradient-based optimization.

First, temporal edges in \( B_t \) that occurred before the target time are used to update node memories. This ensures that predictions are based on historical information only (i.e., no information leakage from the future). The neighbor sampler is also updated accordingly.

Then, for each target node \( v \in V' \), the following steps are performed:

\begin{enumerate}
    \item Temporal subgraph sampling: A subgraph is extracted using the updated sampler, constrained to edges occurring before time \( t \).
    \item Initial embeddings: The memory module provides time-aware node embeddings reflecting their historical state.
    \item Forward pass: A graph neural network (GNN) processes the subgraph to compute context-aware node embeddings.
    \item Prediction and loss: The model generates task-specific predictions, which are compared to the ground-truth labels \( \mathbf{y}_t^{(v)} \). The resulting loss \( \mathcal{L} \) is computed.
    \item Backpropagation: The loss is backpropagated to update: GNN parameters (e.g., message-passing layers, attention mechanisms) and memory-related parameters (e.g., GRU or RNN weights controlling memory updates over time).
\end{enumerate}

The complete workflow for processing batches is illustrated in Figure \ref{fig:batch_with_targets}, showing how the pipeline integrates memory updates with neural network training.

    
    

The model \( f_\theta \) processes edge sequences to update node embeddings \( \mathbf{h}_v^{(t)} \) and predict target labels $\hat{\vecY}_t^{(v)}$, with loss computed over all active nodes:
\begin{eqnarray}\label{eq:lce}
    \mathcal{L}(\vecT) = \frac{1}{T}\sum_{t=1}^T \sum_{v \in \mathcal{V}_t} \mathcal{L}_{\mathrm{CE}} \left(\hat{\vecY}_t^{(v)}, \vecY_t^{(v)} \right) \rightarrow \min_{\vecT},
\end{eqnarray}
where \( \mathcal{L}_{\mathrm{CE}} \) denotes the cross-entropy loss.
Also, formally, we constrain the summands to edges in the current batch, but for simplicity of notation, we will use in the text the loss form presented above throughout the paper.
In our loss function $\tilde{\mathcal{L}}$, we replace $\vecY_t^{(v)}$ with pseudo-labels $\tilde{\vecY}_t^{(v)}$, which are non-zero for almost all nodes and batches, keeping everything else unchanged.

\subsection{Generation of historical pseudo-labels}

For each batch \( B_t \) we compute pseudo-targets \( \tilde{y}_t^{(v)} \) \textit{only for nodes \( v \) participating in \( B_t \)}. The core component \( \tilde{y}_t^{(v)} \) is an aggregate of all historical supervision signals observed for node \( v \) prior to \( t \), 
\begin{eqnarray}\label{eq:ydef}
    \tilde{\vecY}_t^{(v)} = \bar{\vecY}_t^{(v)} + \gamma \cdot \epsilon,
\end{eqnarray}
where \( \bar{\vecY}_t \) comes from aggregating past targets, and \( \epsilon \) is noise generated by first sampling \( \epsilon' \sim \mathcal{U}(-\alpha, \alpha) \) and then subtracting the mean 
to ensure \(\sum_i \epsilon_{i} = 0\). This guarantees that the sum \(\sum_i \tilde{y}_{t,i}^{(v)} = 1\). 
Here, \(\gamma \geq 0\) is a noise scaling factor controlling the magnitude of regularization. This formulation ensures that \(\tilde{y}_{t,i}^{(v)}\) remains a valid probability distribution while reducing gradient variance as proven in Theorem~\ref{th:main}.
We can define \( \bar{\vecY}_t \), derived from \( v \)’s past observations, in different ways.
This paper considers the following aggregation strategies.

\paragraph{Persistent Forecast (PF)} Reuses the most recent observed target for \( v \):
\begin{eqnarray}\label{eq:pf}
    \bar{\vecY}_t^{(v)} = \vecY_{\tau}^{(v)}, \quad \tau = \max\{t' \leq t \,|\, \vecY_{t'}^{(v)} \text{ is available}\}.
\end{eqnarray}

\paragraph{Moving Average (MA)} Updates the pseudo-label incrementally when new targets arrive. Suppose a new target \( \vecY_{t}^{(v)} \) is observed for \( v \) at batch \( t \). The updated pseudo-label becomes:  
\begin{eqnarray}\label{eq:ma}  
    \bar{\vecY}_t^{(v)} = \frac{w - 1}{w} \bar{\vecY}_{t - 1}^{(v)} + \frac{1}{w} \vecY_{t}^{(v)},  
\end{eqnarray}  
with \( w \) being a method hyperparameter. 
Initial \( \bar{\vecY}_0^{(v)} = \vecY_{0}^{(v)} \).

\paragraph{Historical Average (HA)} Aggregates all targets for \( v \) across previous batches:
\begin{eqnarray}\label{eq:ha}
    \bar{\vecY}_t^{(v)} = \frac{1}{L^{(v)}} \sum_{\substack{t' < t}} \vecY_{t'}^{(v)},
\end{eqnarray}
where \( L^{(v)} \) is the number of observed targets for a node \(v\).
This setting corresponds to the HAL case in~Subsection~\ref{sec:hal_convergence}, where the user affinities remain constant.
MA and PF make weaker assumptions about the correlation between the current user preferences and the past observed labels.

\paragraph{Correctness and efficiency of aggregations} HA and MA take into account all past labels, but weight them differently, whereas Persistent Forecast uses the last available value.
Also, both $\bar{\vecY}_t^{(v)}$ and $\tilde{\vecY}_t^{(v)}$ are valid probability distribution. 
For $\bar{\vecY}_t^{(v)}$ in MA, it follows easily from the induction rule; for HA, it also holds because it uses the mean of correct distributions; and for PF, it is evident.
For $\tilde{\vecY}_t^{(v)}$, the introduced normalization procedure leads to the desired effect.
Updating introduced aggregations over timestamps is efficient and can be done iteratively over batches. 
Numerical experiments show small effects of aggregation of the total computational expenses for model training.

\section{Stochastic gradient descent convergence for historical label averaging}
\label{sec:theory}

In this section, we present our main theoretical result on the convergence speed for the pseudo-label generation approach based on Historically Aggregated Labels (HAL).
For MA aggregation, a similar argument applies, but it requires more nuanced technical derivations, so we opt for a more straightforward approach that yields conceptually the same results.

We begin with a preliminary introduction to the convergence of Stochastic Gradient Descent (SGD). 
Then, we continue with the theorem that provides an upper bound on the convergence rate of HAL under natural assumptions, and compare results with a no-aggregation variant.
The complete proofs are postponed to the Appendix~\ref{sec:proofs}.

\subsection{Preliminaries}
\label{sec:sgd_convergence}

Let 
$D = \{(\vecX_i, \vecY_i)\}_{i = 1}^{m}$ be i.i.d. samples drawn from an unknown
distribution $\mathcal{D}$.
$\vecX \in \mathcal{X} \subseteq \mathbb{R}^{d_x}$, $\vecY \in \mathcal{Y} \subseteq \mathbb{R}^{n}$. 
Below, $\mathcal{Y} = \Delta^n = \{\vecY: \sum y_i = 1, y_i \geq 0, 
\mathrm{dim}(\vecY) = n\}$.
For one-hot encoding, a single component is associated with the true label, $y_i = 1$, and all others are zero. 

Given the sample $D$, the empirical risk minimization problem for the parameter vector $\vecT \in \mathbb{R}^d$ is:
\begin{equation}
    \mathcal{L}(\vecT) = 
    \frac{1}{m} \sum_{i=1}^{m} \ell \left(\vecT; (\vecX_i, \vecY_i) \right) \rightarrow \min_{\vecT \in \mathbb{R}^{d}},
\end{equation}
where $\ell :\mathbb{R}^{d} \times \mathcal{X} \times \mathcal{Y} \to \mathbb{R}_{+}$ is a differentiable loss function.

Stochastic Gradient Descent (SGD)~\cite{Robbins1951} updates the parameter vector, starting from the initialization $\vecT_0$, via
\begin{equation}
    \label{eq:sgd_update}
    \vecT_{t + 1} = \vecT_{t} - \alpha_{t} \vecG_t,
    \qquad
    \vecG_t = \frac{1}{B} \sum_{j \in \mathcal{B}_t}
    \nabla \ell \bigl(\vecT_t; (\vecX_j, \vecY_j) \bigr),
\end{equation}
where $t = 0, \ldots, T - 1$ is the SGD iteration number, $\mathcal{B}_t$ is a batch of size $|\mathcal{B}_t| = B$ sampled without replacement.
The estimator $\vecG_t$ is unbiased,
$\mathbb{E}[\vecG_t | \vecT_t] = \nabla \mathcal{L}(\vecT_t)$,
but exhibits non–zero variance and has the following form for i.i.d. data:
\begin{equation}
    \label{eq:grad_var}
    \sigma^2 = 
        \mathbb{E} \|\vecG_t - \nabla \llos(\vecT_t)\|^{2}
    =
    \frac{1}{B}\,\mathbb{E}_{(\vecX, \vecY) \sim \mathcal{D}}
    \|\nabla \ell(\vecT_t; (\vecX, \vecY)) - \nabla \mathcal{L}(\vecT_t)\|^{2}.
\end{equation}

Within our framework, we would replace the original loss function.
Specifically, in $\ell(\vecT_t; (\vecX, \vecY))$ our method would replace $\vecY$ with a pseudo-labels vector $\vecY'$.
As long as $\vecG_t$ is unbiased, 
the theoretical results on convergence speed below hold, as easily follows from~\cite{rakhlin2012making,shalev2014understanding}.

For vanilla SGD, there exists an upper bound for a regret $R_T$ for a diminishing step size $\alpha_t = \frac{1}{\mu t}$: 
\[
    R_T = \mathbb{E}\bigl[ \llos(\vecT_{T}) - \llos(\vecT^{\star}) \bigr],
\]
where $\vecT^{\star}$ denotes the unique minimizer of $\llos$.

\begin{theorem}[adopted from \cite{shamir013}]
\label{th:sgd_upper}
For a $\mu$-strongly convex loss function and an unbiased $\vecG_t$ defined in~\eqref{eq:sgd_update} with variance $\sigma^2$ and batch size $B$ for the step size sequence $\alpha_t = \frac{1}{\lambda t}$, the regret has the upper bound:
\begin{equation}
    \label{eq:sgd_bound}
    R_T
    \le
      \frac{17 \sigma^{2}}{\mu B T} \left( 1 + \log T \right).
\end{equation}
\end{theorem}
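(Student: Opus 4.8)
The plan is to treat this as the classical last-iterate convergence bound for strongly convex stochastic optimization and to follow the argument of \cite{shamir013}, adapting it to the mini-batch estimator of~\eqref{eq:sgd_update}. Write $a_t = \mathbb{E}[\llos(\vecT_t) - \llos(\vecT^{\star})]$ for the expected optimality gap and $d_t = \mathbb{E}\|\vecT_t - \vecT^{\star}\|^2$ for the expected squared distance to the optimum. First I would expand the squared distance after one SGD step,
\[
\|\vecT_{t+1} - \vecT^{\star}\|^2 = \|\vecT_t - \vecT^{\star}\|^2 - 2\alpha_t \langle \vecG_t, \vecT_t - \vecT^{\star}\rangle + \alpha_t^2 \|\vecG_t\|^2,
\]
take the conditional expectation given $\vecT_t$, and use unbiasedness $\mathbb{E}[\vecG_t \mid \vecT_t] = \nabla\llos(\vecT_t)$ so the cross term becomes $\langle \nabla\llos(\vecT_t), \vecT_t - \vecT^{\star}\rangle$. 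Bounding the last term by the second moment $\mathbb{E}\|\vecG_t\|^2 \le \sigma^2/B$ (the $1/B$ being the variance-reduction factor from batch averaging, cf.~\eqref{eq:grad_var}) and lower-bounding the cross term by $\mu$-strong convexity, $\langle \nabla\llos(\vecT_t), \vecT_t - \vecT^{\star}\rangle \ge a_t + \tfrac{\mu}{2} d_t$, yields a one-step recursion for $d_t$.

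Substituting $\alpha_t = 1/(\mu t)$ turns the coefficient $1 - \alpha_t\mu$ into $(t-1)/t$ and, after rearranging, produces the key per-step inequality
\[
a_t \le \frac{\mu}{2}\bigl[(t-1)d_t - t\, d_{t+1}\bigr] + \frac{\sigma^2}{2\mu B t}.
\]
Summing from $t=1$ to $T$ telescopes the bracketed terms to $-T d_{T+1} \le 0$ and leaves a harmonic sum, so the \emph{average} gap is immediately bounded by $\frac{\sigma^2}{2\mu B T}\sum_{t=1}^{T} \frac{1}{t} \le \frac{\sigma^2(1 + \log T)}{2\mu B T}$. This already exhibits the target rate and the $1/B$ dependence.

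The main obstacle is that the statement concerns the \emph{last} iterate $\vecT_T$, not the average, and a naive route through $d_T$ combined with smoothness would cost an extra condition-number factor $L/\mu$. To obtain the clean constant, I would invoke the suffix-averaging device of \cite{shamir013}: define the tail averages $S_k = \frac{1}{k} \sum_{t=T-k+1}^{T} a_t$, sum the per-step inequality over each suffix, and combine it with strong convexity in the form $d_{T-k+1} \le \frac{2}{\mu} a_{T-k+1}$ to obtain a recursion that controls the increments $S_k - S_{k+1}$ by terms of order $\sigma^2/(\mu B (T-k))$. Telescoping these increments from $S_T$ (the average bounded above) up to $S_1 = a_T$ accumulates a second harmonic sum, which is exactly what converts the average-iterate rate into the last-iterate bound and inflates the leading constant to $17$, giving $R_T = a_T \le \frac{17\sigma^2}{\mu B T}(1 + \log T)$.

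Finally, I would note that the adaptation to our setting is immediate: as observed in the excerpt, every step uses only unbiasedness and a second-moment bound on $\vecG_t$, so replacing the labels $\vecY$ by pseudo-labels $\vecY'$ leaves the derivation intact provided the estimator stays unbiased with variance $\sigma^2$, and the entire benefit of pseudo-supervision enters solely through a smaller $\sigma^2$. The qualitative assumptions to flag are the $\mu$-strong convexity requirement and the specific diminishing schedule $\alpha_t = 1/(\mu t)$ to which the constant is tuned.
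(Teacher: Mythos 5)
The paper does not actually prove this theorem---it is imported from Shamir and Zhang (2013), as the bracketed attribution indicates, and no proof appears in the appendix---and your reconstruction (one-step expansion of $\|\vecT_{t+1}-\vecT^{\star}\|^2$, unbiasedness, the strong-convexity lower bound on the cross term, telescoping under $\alpha_t = 1/(\mu t)$ to get the average-iterate rate, then the suffix-averaging device to convert it into a last-iterate bound with the constant $17$) is precisely the argument of that source, so your approach coincides with the one the paper implicitly relies on. The only caveat, which you inherit from the paper's own statement rather than introduce yourself, is that Shamir--Zhang's result is phrased in terms of a uniform second-moment bound $\mathbb{E}\|\vecG_t\|^2 \le G^2$ rather than the variance alone, so writing $\mathbb{E}\|\vecG_t\|^2 \le \sigma^2/B$ silently discards the $\|\nabla\llos(\vecT_t)\|^2$ contribution; making this rigorous with only a variance assumption requires an additional smoothness argument to absorb that term.
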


This bound is sufficiently tight to describe the real dynamics~\cite{shalev2014understanding}.
Variations of this bound and SGD are also discussed in the literature
\cite{Bottou2018,Bubeck2015}, while the above form would be sufficient for our purposes.

The upper bound
$\sigma^{2}(1 + \log T) / (\mu B T)$ depends
linearly on the gradient--noise variance $\sigma^{2}$ and the inverse of the batch size.
Consequently, we can adjust the convergence speed by minimizing noise variance $\sigma^2$ and maximizing the batch size.
Increasing the batch size $B$ is a common advice for faster convergence, e.g., \cite{you2018imagenet} shows that one can train a ResNet model within 20 minutes using ImageNet with large batches.
Momentum-based approaches for SGD also indirectly increase batch size, improving convergence~\cite{kingma2015adam}.
However, the second component of the variance reduction, $\sigma^2$, related to searching for a lower-variance noise, is often overlooked.

\subsection{Convergence speed for history-average-label SGD}
\label{sec:hal_convergence}
This subsection presents our results on the convergence of SGD with historical pseudo-labels and demonstrates its improvement over vanilla SGD.

Suppose that there are $k$ out of $n$ true labels with a uniform probability of occurrence over them.
Each time, a user selects a label uniformly at random, producing a single label.
We consider an alternative label for a single observation that aggregates past labels to produce $\vecY$ at the current moment, corresponding to our history-average labels.
For this scheme, we calculate the variance of the gradients, $\sigma^2$, obtaining an upper bound for $R_T$ of a standard form.
Specifically, it is decomposed into a product of the parameter values and the labelling-related variance, which is available in closed form.

Canonically, for multilabel classification, $\vecY$ is a one-hot vector with $1$ being at the place of the observed label and $0$ at all other places.
Let us call it \emph{the one-hot label (OH)}.
Aggregating over history and normalising leads to the ground truth vector of the form $\vecY = (\frac{k_1}{h}, \ldots, \frac{k_n}{h})$,
where $h$ is the length of the history of observations and $k_i$ is the number of observations of the $i$-th label within it.
We call it \emph{History Average Labels (HA)}.
Below we consider two options to present $y_i$ --- the $i$-th component of $\vecY$ vector that belongs to the set of the true labels. 

\paragraph{OH case.} 
In this scenario, our random variable $y_i = t_{1}$: 
\[
t_{1} = \eta \xi,
\]
where $\eta$ is the event of observing a specific true label, a Bernoulli random variable $\mathrm{Be}(\frac{1}{k})$, and $\xi$ is another Bernoulli random variable, an indicator of the observation of any of the true labels. It is $\sim \mathrm{Be}(u)$, with $u$ is typically close to $1$.

\paragraph{HA case (ours).} Now let us consider the aggregation of history. 
We assume equal probabilities for each of $k$ correct labels $\frac{1}{k}$ and the history of size $h$. 
Then the presented $y_i = t_{h}$:
\[
t_{h} = \eta_k \xi, 
\]
where $\xi$ is defined above and $\eta_k$ is a component of a multinomial random vector with equal probabilities $\frac{1}{k}$ and the total number of observations $h$, divided by $h$, as we aim to match the event type probability.
As suggested by the notation, $t_1$ is $t_h$ for $h = 1$.
Below, we assume that we form batches from independent observations of HA by considering users separately. 

\begin{lemma}
\label{lemma:HAL_variance}
$\mathbb{E} t_{h} = \frac{u}{k}$ and the variance of $t_{h}$ is~$u \frac{k - 1}{k^2 h} + u (1 - u) \frac{1}{k^2}$.    
\end{lemma}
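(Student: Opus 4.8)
The plan is to treat $t_h = \eta_k \xi$ as a product of two independent random variables and read off its first two moments directly. The first step is to pin down the marginal law of $\eta_k$. Since $\eta_k = N_i / h$, where $N_i$ is the $i$-th coordinate of a multinomial count vector with $h$ trials and cell probabilities $\tfrac{1}{k}$, its marginal is binomial, $N_i \sim \mathrm{Binom}(h, \tfrac{1}{k})$. Hence $\mathbb{E}[\eta_k] = \tfrac{1}{k}$ and $\mathrm{Var}(\eta_k) = \tfrac{1}{h^2}\cdot h \cdot \tfrac{1}{k}\bigl(1 - \tfrac{1}{k}\bigr) = \tfrac{k-1}{k^2 h}$. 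The factor $\xi \sim \mathrm{Be}(u)$ contributes $\mathbb{E}[\xi] = u$ and $\mathrm{Var}(\xi) = u(1-u)$.

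For the expectation I would simply invoke independence: $\mathbb{E}[t_h] = \mathbb{E}[\eta_k]\,\mathbb{E}[\xi] = \tfrac{u}{k}$, matching the claim. For the variance, the key tool is the product rule for independent variables, $\mathrm{Var}(XY) = \mathrm{Var}(X)\mathrm{Var}(Y) + \mathrm{Var}(X)(\mathbb{E}Y)^2 + (\mathbb{E}X)^2\mathrm{Var}(Y)$, which follows from $\mathbb{E}[X^2Y^2] = \mathbb{E}[X^2]\,\mathbb{E}[Y^2]$ after expanding each second moment as variance plus squared mean. Substituting the four quantities above and grouping the two terms carrying the factor $\tfrac{k-1}{k^2 h}u$ --- namely $\tfrac{k-1}{k^2 h}u(1-u)$ and $\tfrac{k-1}{k^2 h}u^2$ --- they collapse via $(1-u) + u = 1$ into $\tfrac{k-1}{k^2 h}u$, leaving the residual $\tfrac{1}{k^2}u(1-u)$. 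Together these give $u\tfrac{k-1}{k^2 h} + u(1-u)\tfrac{1}{k^2}$, exactly the stated variance. As a consistency check I would specialise to $h = 1$: there $N_i \sim \mathrm{Be}(\tfrac{1}{k})$, so $t_1$ is a product of two independent Bernoullis and hence itself $\mathrm{Be}(\tfrac{u}{k})$ with variance $\tfrac{u}{k}\bigl(1 - \tfrac{u}{k}\bigr)$; the general formula should reduce to this, confirming that the one-hot label is recovered as the $h=1$ case.

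The algebra here is routine once the model is made explicit, so the main obstacle is conceptual rather than computational: justifying that $\eta_k$ and $\xi$ may be treated as independent, and that a single overall factor $\xi$ --- rather than a separate Bernoulli attached to each of the $h$ historical observations --- is the intended encoding of ``the observation belongs to the set of true labels.'' I would therefore state these two points as the standing assumptions of the lemma, and emphasise that the only nontrivial distributional input is the binomial marginal of the multinomial vector; everything else is the independence-product moment bookkeeping above.
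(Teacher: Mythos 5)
Your proposal is correct and follows essentially the same route as the paper: both treat $t_h$ as the product of an independent Bernoulli $\xi\sim\mathrm{Be}(u)$ and a scaled binomial $\eta_k\sim\tfrac{1}{h}\mathrm{Binom}(h,\tfrac1k)$ and compute the first two moments via independence. The only cosmetic difference is that you invoke the general product-variance identity $\mathrm{Var}(XY)=\mathrm{Var}(X)\mathrm{Var}(Y)+\mathrm{Var}(X)(\mathbb{E}Y)^2+(\mathbb{E}X)^2\mathrm{Var}(Y)$, whereas the paper computes $\mathbb{E}[Y^2]$ directly using $\xi^2=\xi$; the two bookkeepings are equivalent, and your $h=1$ sanity check is a nice addition.
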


For the OH and the HA, we have the same unbiased mean value~$\frac{u}{k}$, but the variances differ: the OH's variance is $\sigma^2 = \frac{u}{k} \left(1 - \frac{u}{k}\right)$, and the HA's variance is $\sim \frac{u}{k h} + \frac{u (1 - u)}{k^2}$.
Without compromising tightness, we upper bound the variance of $t_1$ by $\frac{u}{k}$ and of $t_h$ by $\frac{u}{k h} + \frac{u}{k^2}$.  
For a large history length aggregated $h$ or a large number of items in a catalogue $k$, we have lower variance for labels $\sim \frac{1}{k\mathrm{min}(h, k)}$ compared to the order $\frac{1}{k}$ for the first case.
Thus, for $y_{i} = t_h$, we have the variance and the upper bound for it.

Finally, we need to derive \emph{the variance of the gradient} with respect to the parameters, given the variance in labels, derived above. 
Let us consider the last layer before the softmax function.
It has the form $\mathbf{p} = \mathrm{softmax}(C \mathbf{e})$, 
where $\mathbf{e}$ is the embedding vector for the last layer, $C$ is the parameters matrix, and $\mathbf{p}$ is the vector of predicted probabilities for labels.
For an index that corresponds to the correct label \( y \) for the cross-entropy loss function, the gradient is
$\frac{\partial \mathcal{L}}{\partial c_{ij}}  = (p_i - y_i) e_j$.
Thus, the variance of the partial derivative $\mathrm{var}\left( \frac{\partial \mathcal{L}}{\partial c_{ij}} \right) = e_j^2 \mathrm{var}(p_i - y_i) = e_j^2 \mathrm{var}(y_i) = e_j^2 \mathrm{var}(t_h)$.
The main term here is the variance $\mathrm{var}(t_h)$.
For the previous layers, due to the chain rule, we have a similar linear decomposition of the variance of the gradient of the form $\tilde{c} \mathrm{var}(t_h)$, where $\tilde{c}$ is a non-stochastic constant.
Thus, the overall variance of the gradient can be represented in the form $\tilde{c} \mathrm{var}(t_h)$ for some positive constant $\tilde{c} > 0$. 

Plugging in our variance estimates from Lemma~ \ref{lemma:HAL_variance} for $h = 1$ and $h > 1$ into the gradient variance with respect to the parameters, we obtain the convergence speed for OH and HA cases.
\begin{theorem}
\label{th:main}
Consider SGD in settings from Theorem~\ref{th:sgd_upper}.
The following inequalities for the regret hold for a positive constant $c$:    
\begin{itemize}
    \item Under the assumptions of OH, for the regret $R_T$ it holds:
\[
R_T \leq \textcolor{BurntOrange}{\left( 1 - \frac{u}{k}\right)} \frac{u}{k} \frac{c}{\mu B} \frac{1 + \log T}{T} \leq \frac{u}{k} \frac{c}{\mu B} \frac{1 + \log T}{T}.
\]
\item Under the assumptions of HA, for the regret $R_T$ it holds:
\begin{align*}
R_T \leq \textcolor{RoyalBlue}{\left( \frac{k - 1}{k h} + \frac{1 - u}{k} \right)} \frac{u}{k} \frac{c}{\mu B} \frac{1 + \log T}{T}  
\leq \textcolor{RoyalBlue}{\frac{2}{\min(h, k)}} \frac{u}{k} \frac{c}{\mu B} \frac{1 + \log T}{T}.
\end{align*}
\end{itemize}
\end{theorem}

With the rightmost more informal upper bounds for each $R_T$ obtained by upper bounding the colored terms in the left inequalities,
we see that the convergence speed for HA increases by a factor of $\min(h, k)$ when the history is used.
From the theorem above, we can also conclude that for small $k$, an increase of $h$ doesn't affect the convergence speed, as in $\min(h, k)$, $k$ would dominate.
For MA, the role of the history length $h$ is taken by the effective window length~$w$.
Given these results, we theoretically justify a label-aggregation approach for improved convergence of TGN models.

\paragraph{Non-convex case.}
In Theorem~\ref{th:main}, we consider a $\mu$-strongly convex function assumption that often fails to hold.
Similar observations on the crucial role of stochastic gradient variance also hold for the non-convex case and for alternative momentum-based first-order methods~\cite{yan2018unified}, though the specific results would be more cluttered.
Regarding quality, classical statistical learning theory suggests that with lower label variance, we obtain better models with less overfitting~\cite{mohri2018foundations}.


\section{Results}
\label{sec:results}

\subsection{Experiments setups}

\paragraph{Datasets and protocol.}

We use TGB~\cite{huang2023temporal}, a widely adopted benchmark for Dynamic Node Property Prediction.
Our experiments consider all four large-scale dynamic graphs from this benchmark, with varying structural patterns and interaction counts up to $2.5$ million. The datasets span diverse domains: \emph{tgbn-trade} (international agriculture trade, 1986-2016), \emph{tgbn-genre} (user-music interactions), \emph{tgbn-reddit} (user-subreddit activity), and \emph{tgbn-token} (cryptocurrency transactions).  
As the target metric, we use NDCG@10, with higher values indicating better model quality.
Following the TGB~\cite{huang2023temporal} protocol, datasets are split chronologically into training (70\%), validation (15\%), and test (15\%) sets. 




\paragraph{Metrics.}
We evaluate model performance using Normalized Discounted Cumulative Gain at rank 10 (NDCG@10)~\cite{jarvelin2002cumulated}, which measures the ranking quality of the top-10 predicted affinities against ground-truth distributions. For example, in music genre prediction, NDCG@10 quantifies how well the model prioritizes genres a user is likely to engage with, based on their historical listening frequencies. Higher NDCG@10 indicates better model quality.

\paragraph{Dataset properties}

Comprehensive dataset statistics are presented in Table~\ref{tab:datasets}. We evaluate our approach on four datasets spanning diverse scales and domains. Tgbn-trade represents the smallest graph with 255 nodes and 337K edges, while tgbn-genre and tgbn-reddit constitute medium-scale networks with 1.5K-11.7K nodes and 17.9M-27.2M edges. Tgbn-token forms the largest benchmark, comprising 61.7K nodes and 72.9M temporal interactions.

A critical observation emerges regarding the relationship between dataset scale and supervision sparsity. Density-defined as the fraction of batches containing ground-truth labels-exhibits an inverse correlation with graph size. While tgbn-trade maintains 1.30\% label density, this metric degrades to 0.06\% for tgbn-token, meaning labeled batches constitute less than one in a thousand training steps. This extreme supervision scarcity poses a fundamental challenge: the vast majority of parameter updates are foregone in conventional training pipelines, creating a pronounced bottleneck for large-scale temporal graphs where the gradient sparsity problem becomes especially acute.

\begin{table}[h]
\caption{Statistics for used datasets. Density is the number of batches with non-zero labels divided by the total number of batches}
\centering
\label{tab:datasets}
\renewcommand{\arraystretch}{0.75}
\begin{tabular}{ll@{\hspace{8mm}}ll} 
    \toprule
    Dataset & \multicolumn{2}{c}{Number of} & \multicolumn{1}{c}{Density} \\
    \cmidrule(lr){2-3}
    & \multicolumn{1}{c}{Nodes} & \multicolumn{1}{c}{Edges} & \\
    \midrule 
    tgbn-trade & 255 & 337,224 & 1.30\% \\
    tgbn-genre & 1,505 & 17,858,395 & 1.31\% \\
    tgbn-reddit & 11,766 & 27,174,118 & 0.44\% \\
    tgbn-token & 61,756 & 72,936,998 & 0.06\% \\
    \bottomrule
\end{tabular}
\end{table}

\paragraph{Implementation.}

\begin{table}[t]
\centering
\caption{Test NDCG@10 on TGB datasets.}
\label{tab:tgbn-ndcg-test}
\addtolength{\tabcolsep}{-2.5pt}
\renewcommand{\arraystretch}{0.75}
\begin{tabular}{lcccc}
\toprule
Method &  \multicolumn{4}{c}{NDCG@10 Test \( \uparrow \)} \\
\cmidrule(lr){2-5}
& tgbn-trade & tgbn-genre & tgbn-reddit & tgbn-token \\
\midrule
MA (ours)    & 0.729 & \textbf{0.486} & \textbf{0.511} & \textbf{0.344} \\
PF & 0.710 & 0.457 & 0.487 & \underline{0.298} \\
Mean target     & 0.377 & 0.358 & 0.303 & 0.088 \\
\midrule
JODIE           & 0.374 & 0.350 & 0.314 & --    \\
TGAT            & 0.375 & 0.352 & 0.314 & --    \\
CAWN            & 0.374 & --    & --    & -- \\
TCL             & 0.375 & 0.354 & 0.314 & -- \\
GraphMixer      & 0.375 & 0.352 & 0.314 & -- \\
DyGFormer       & 0.388 & 0.365 & 0.316 & -- \\
DyRep           & 0.374 & 0.351 & 0.312 & 0.141 \\
TGN             & 0.374 & 0.367 & 0.315 & 0.169 \\
DyRepv2         & \underline{0.733} & \underline{0.473} & 0.504 & 0.261 \\
TGNv2           & \textbf{0.735} & 0.469 & \underline{0.507} & 0.294 \\
\bottomrule
\end{tabular}
\end{table}


TGNv2~\cite{tjandra2024tgnv2} is the only architecture achieving meaningful performance on TGB; other baselines collapse to trivial solutions. We focus experiments on TGNv2 and additionally modify DyRep~\cite{trivedi2018dyrep} with TGNv2's source-target identification mechanism (DyRepv2) to demonstrate the applicability of label averaging for different architectures. 

We evaluate two aggregation strategies against baselines without pseudo-labeling: Moving Average (MA) and Persistent Forecast (PF) for TGNv2.


\subsection{Main results}

\paragraph{Full-scale performance}
Table~\ref{tab:tgbn-ndcg-test} validates our pseudo-labeling with TGNv2 by comparing it to other architectures.
JODIE~\cite{kumar2019predicting}, TGAT~\cite{xu2020tgat}, CAWN~\cite{wang2021caw}, TCL~\cite{wang2021tcl}, GraphMixer~\cite{cong2023graphmixer}, DyGFormer~\cite{yu2023towards}, DyRep~\cite{trivedi2018dyrep}, and TGN~\cite{tgn_icml_grl2020} achieve NDCG@10 scores statistically indistinguishable from the mean target baseline across datasets, indicating model collapse. 
Only TGNv2 and our proposed DyRepv2 variant consistently surpass this trivial predictor, with TGNv2 and DyRepv2 delivering 2–4$\times$ improvements over collapsed models. 
Our MA further improves TGNv2 scores.

\begin{figure}[t]
\centering
\includegraphics[width=7cm]{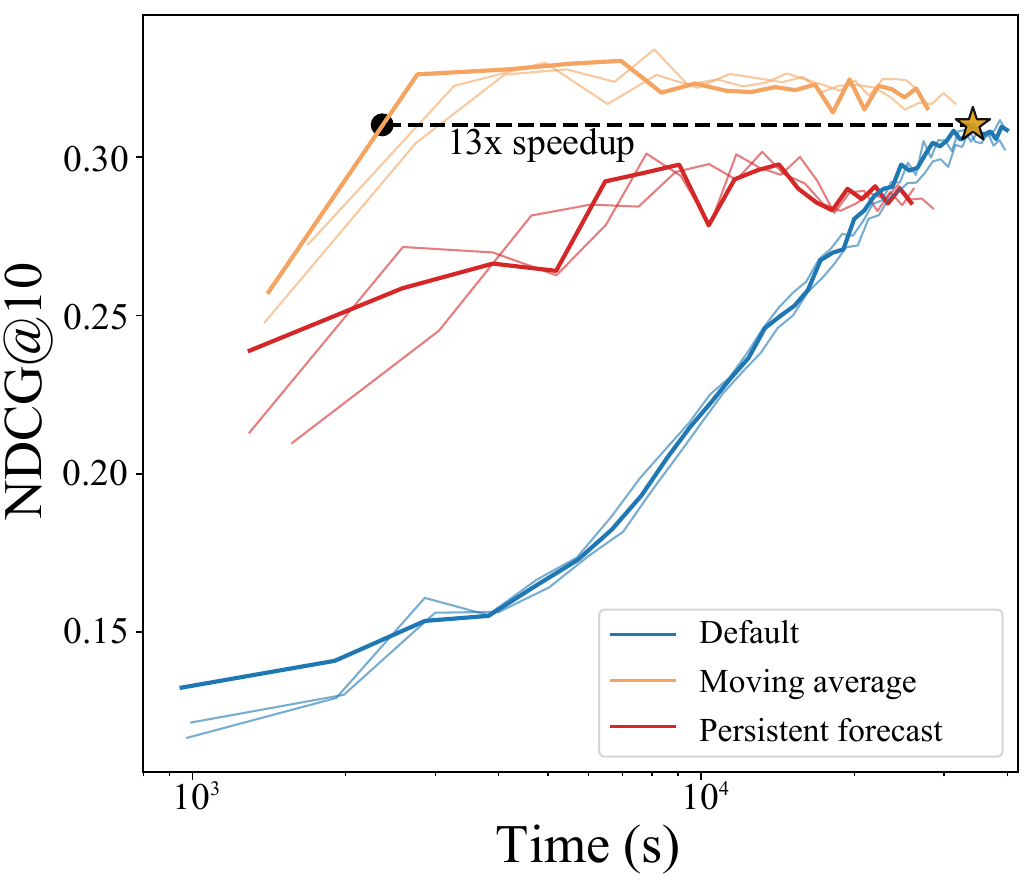}
\vspace{-5pt}
\caption{NDCG@10 progression versus logarithmic training time for different pseudo-label strategies on a 5\% subsample of the tgbn-token dataset using the TGNv2 model. The reduced dataset size enables clearer visualization of training dynamics. The x-axis shows training time in seconds (log scale), while the y-axis displays NDCG@10 on the validation split.}
\label{fig:token_ndcg}
\end{figure}


\paragraph{One-epoch performance}
Table~\ref{tab:main-1epoch} provides a comparison of model performance and efficiency after a single training epoch, and Figure~\ref{fig:comparison_plot} shares the same numbers in a graphical way. 
We evaluate three baseline setups: Default-1 shows the quality achieved after a single epoch of vanilla training, establishing the baseline performance. Default-X demonstrates the quality obtained when training the vanilla model for X epochs, specifically selected so that the total training time roughly matches or slightly exceeds that of our aggregation methods (where X=4 for tgbn-trade and X=2 for tgbn-genre, tgbn-reddit, and tgbn-token). This configuration provides a fair, runtime-controlled comparison that accounts for the additional computational overhead introduced by pseudo-label generation and aggregation. Default represents the performance of the vanilla model trained until convergence, serving as the upper-bound baseline quality. We compare these against our aggregation strategies MA and PF. In addition, due to the similarity between our approach and regularization-based techniques, we include Label
  Smoothing (LS) in the comparison. We further add Self-supervised next-edge prediction (SSL) as a baseline, which is
  a standard choice in low-supervision regimes.

\begin{table*}[t]
  \centering
    \caption{Performance comparison for MA aggregation and Default options. We include Default-X configurations (vanilla model trained for X epochs). For trade, X = 4; for genre, reddit, and token, X = 2. We also report results for Label Smoothing (LS) and Self-supervised next-edge prediction (SSL) baselines. The final column shows the ratio of MA to Default training time. Results averaged over three seeds; standard deviations in Appendix~\ref{sec:add_results}.}
    \vspace{-10pt}
    \label{tab:main-1epoch}
    \addtolength{\tabcolsep}{-2.2pt}
    \renewcommand{\arraystretch}{0.85}
    \begin{tabular}{llccccc ccccc c cccc} 
      \toprule
      Dataset & Model & \multicolumn{7}{c}{NDCG@10 Test \( \uparrow \)} & \multicolumn{7}{c}{Time(s) \( \downarrow \)} & Def / MA  \\
      \cmidrule(lr){3-9} \cmidrule(lr){10-16}
       & & Def-1 & Def-X & Def & LS & SSL & PF & MA & Def-1 & Def-X & Def & LS & SSL & PF & MA & time \\
      \midrule
      \multirow{2}{*}{tgbn-trade} 
      & TGNv2 & 0.386 & 0.449 & \textbf{0.735} & 0.728 & 0.730 & 0.710 & 0.729 & 23 & 84 & 945 & 1022 & 1054 & 68 & 78 & 12.11 \\
      & DyRepv2 & 0.387 & 0.448 & 0.733 & 0.727 & 0.730 & 0.711 & \textbf{0.734} & 26 & 102 & 1043 & 978 & 1103 & 81 & 94 & 11.09 \\
      \midrule
      \multirow{2}{*}{tgbn-genre} 
      & TGNv2 & 0.467 & 0.469 & 0.469 & 0.472 & 0.476 & 0.457 & \textbf{0.486} & 3544 & 7122 & 10232 & 10745 & 14936 & 5182 & 6678 & 1.53 \\
      & DyRepv2 & 0.470 & 0.473 & 0.473 & 0.470 & 0.472 & 0.453 & \textbf{0.482} & 3489 & 7023 & 11932 & 10453 & 15492 & 5049 & 6982 & 1.71 \\
      \midrule
      \multirow{2}{*}{tgbn-reddit} 
      & TGNv2 & 0.454 & 0.470 & 0.507 & 0.506 & 0.506 & 0.487 & \textbf{0.511} & 13564 & 27846 & 257834 & 265295 & 272952 & 26345 & 26543 & 9.71 \\
      & DyRepv2 & 0.453 & 0.472 & 0.504 & 0.505 & \textbf{0.506} & 0.487 & \textbf{0.506} & 14532 & 29593 & 263096 & 258836 & 275836 & 27956 & 27425 & 9.59 \\
      \midrule
      \multirow{2}{*}{tgbn-token} 
      & TGNv2 & 0.167 & 0.197 & 0.294 & 0.292 & 0.290 & 0.298 & \textbf{0.344} & 19245 & 36383 & 172943 & 176264 & 201479 & 36852 & 36960 & 4.68 \\
      & DyRepv2 & 0.164 & 0.190 & 0.261 & 0.264 & 0.259 & 0.285 & \textbf{0.332} & 20149 & 37932 & 178997 & 177467 & 197939 & 37859 & 38485 & 4.65 \\
      \bottomrule
    \end{tabular}
  \hfill
\end{table*}

\begin{figure}[h!]
    \centering
    \includegraphics[width=0.9\linewidth]{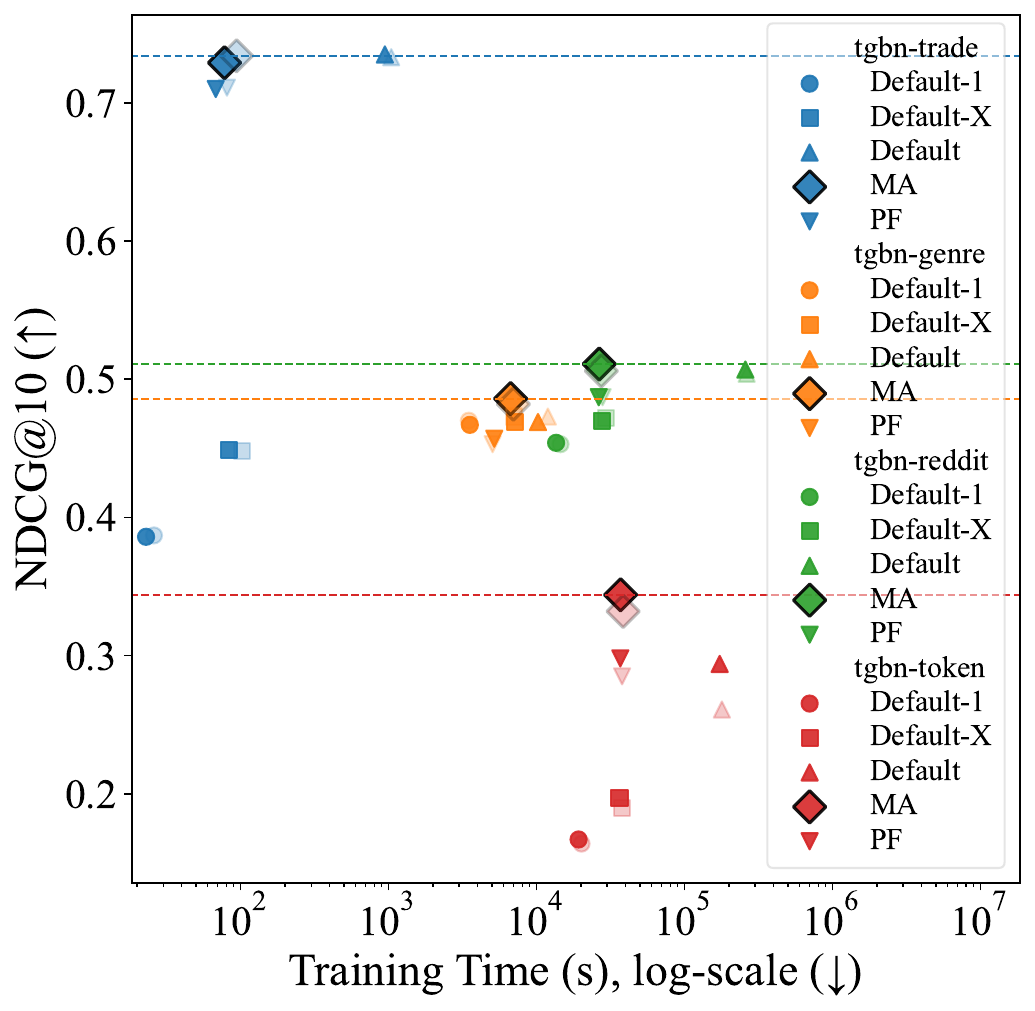}
    \vspace{-10pt}
    \caption{Same metrics plotted, NDCG@10 vs. training time. Darker markers indicate TGNv2; lighter ones indicate DyRepv2. The horizontal lines correspond to the best metrics for each dataset (MA is always the best).}
    \label{fig:comparison_plot}
\end{figure}


The results demonstrate that aggregation strategies not only accelerate convergence but also achieve superior model quality across all datasets. Remarkably, after just a single epoch with aggregation (MA, PF), our methods surpass on three of four datasets the performance of Default models trained until full convergence, while requiring substantially less training time. Aggregation methods achieve higher NDCG@10 scores in a fraction of the time required by vanilla training. Even when compared to Default-X configurations with matched runtime budgets, aggregation-based approaches maintain their performance edge. Moreover, our approach outperforms both Label Smoothing and SSL on most datasets. These findings confirm that pseudo-label aggregation simultaneously improves both predictive accuracy and computational efficiency across diverse temporal graph benchmarks.

\paragraph{Training curve comparison}
Figure~\ref{fig:token_ndcg} shows NDCG@10 progression on a 5\% subsample of tgbn-token, the most challenging dataset due to extreme label sparsity (0.06\% supervision density). We use this reduced subset to clearly illustrate the training dynamics and convergence behavior, as the smaller scale allows for more frequent evaluation and better visualization of the learning trajectory. All aggregation strategies dramatically outperform Default, achieving orders-of-magnitude faster convergence. MA rapidly adapts to recent interaction patterns, while PF shows conservative gains due to its reliance on persistent forecasts in the highly dynamic cryptocurrency transaction setting. The average 6\( \times \) speedup achieved by MA to reach baseline performance highlights the substantial efficiency gains enabled by pseudo-label aggregation.

\subsection{Additional studies}

\paragraph{Scalability across dataset sizes.}

\begin{figure}[t!]
    \centering
    \begin{subfigure}[t]{0.45\linewidth}
        \centering
        \includegraphics[width=\linewidth]{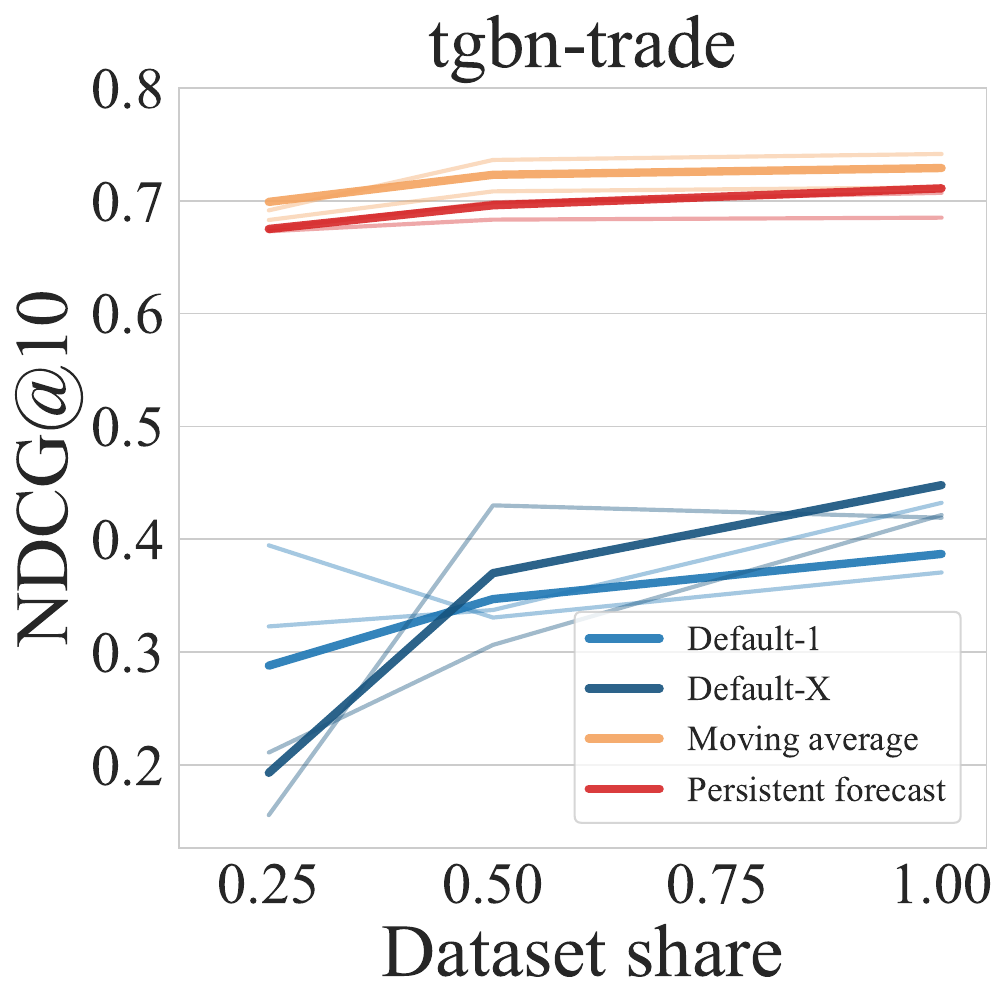}
        \label{fig:window_trade}
    \end{subfigure}
    \hfill
    \begin{subfigure}[t]{0.45\linewidth}
        \centering
        \includegraphics[width=\linewidth]{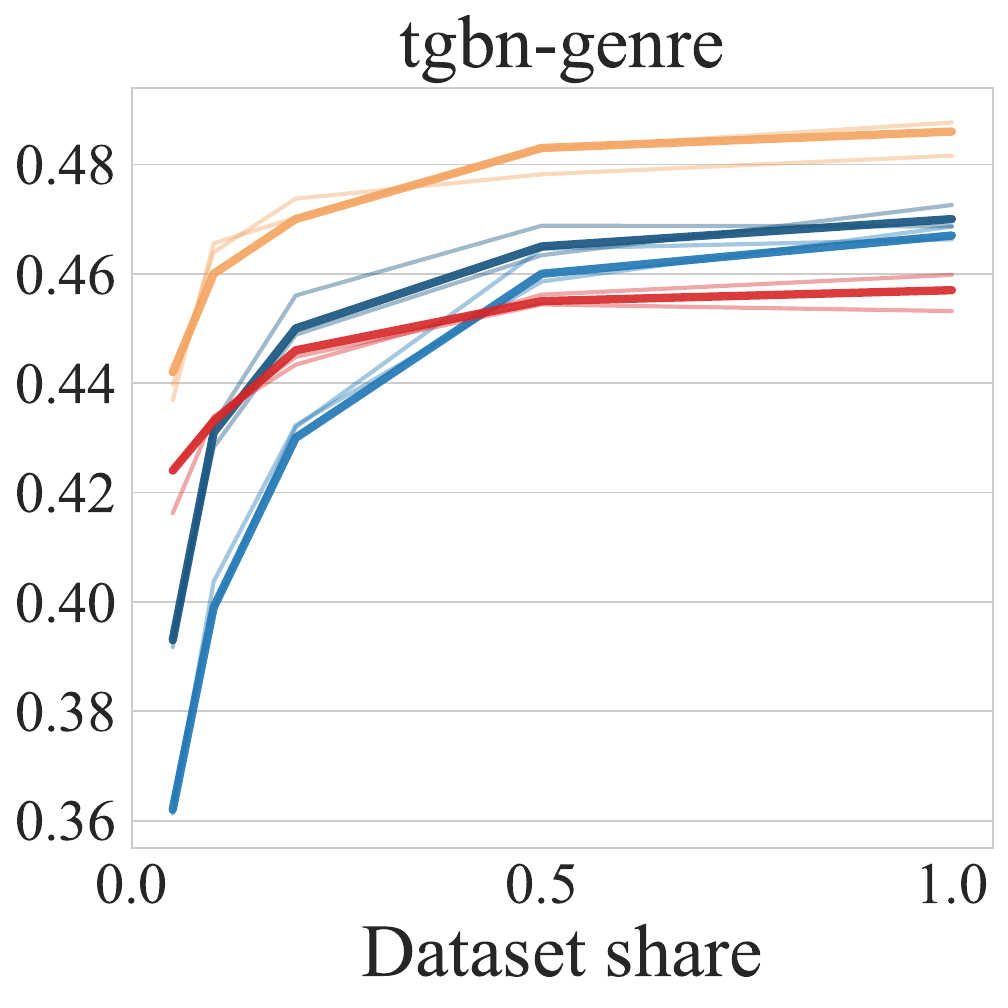}
        \label{fig:window_genre}
    \end{subfigure}
    \vfill
    \vspace{-10pt}
    \begin{subfigure}[t]{0.45\linewidth}
        \centering
        \includegraphics[width=\linewidth]{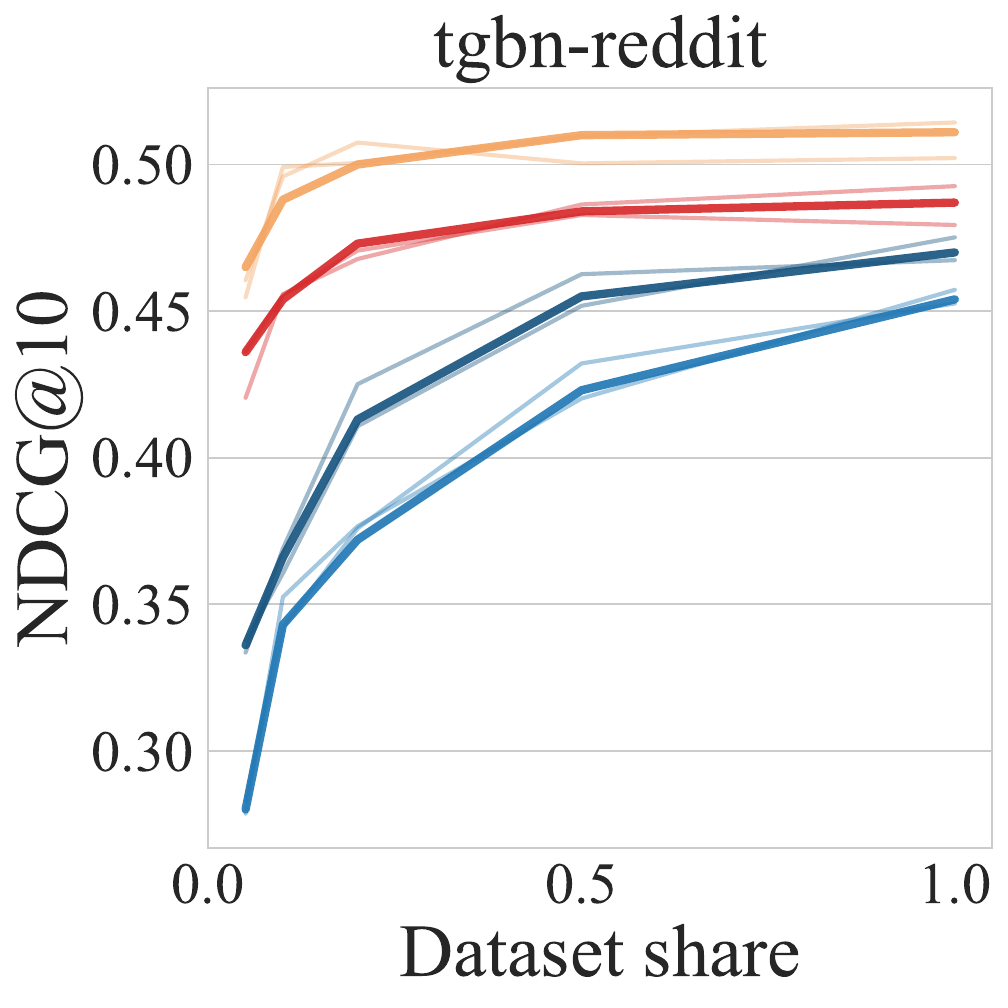}
        \label{fig:window_reddit}
    \end{subfigure}
    \hfill
    \begin{subfigure}[t]{0.45\linewidth}
        \centering
        \includegraphics[width=\linewidth]{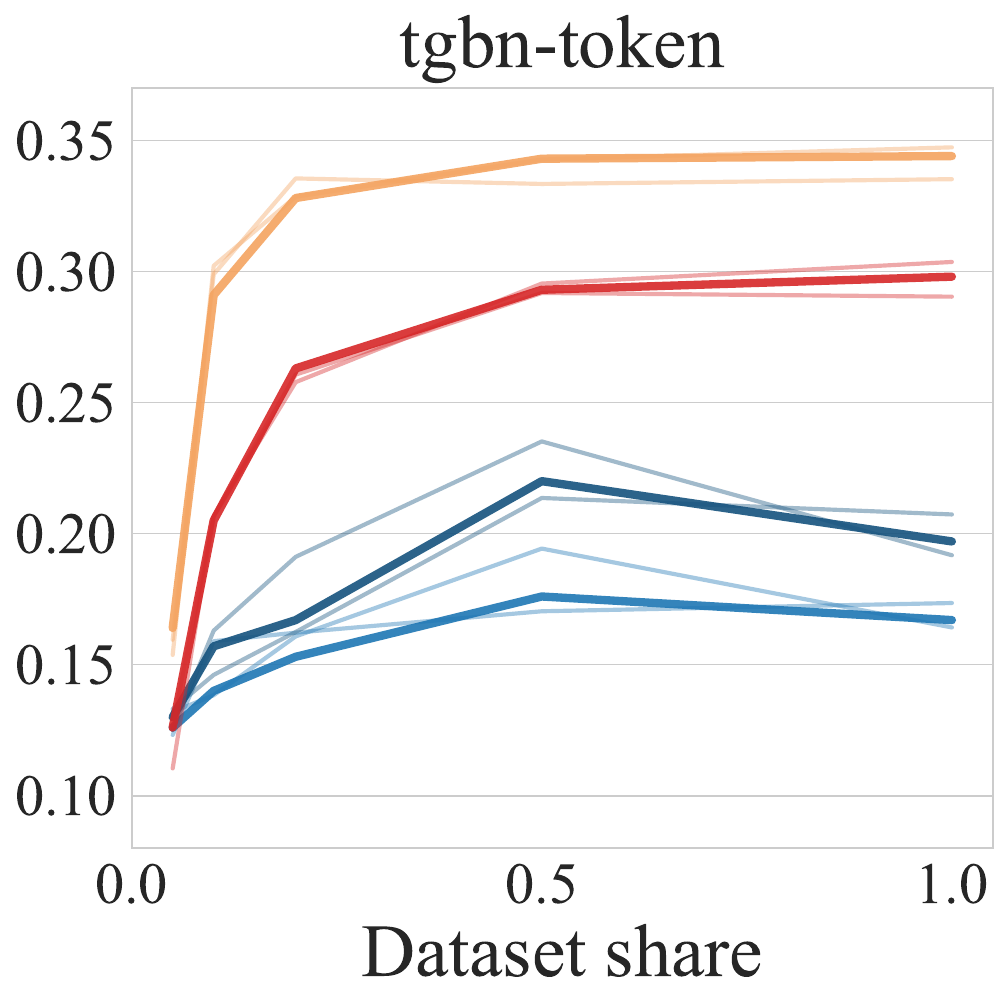}
        \label{fig:window_token}
    \end{subfigure}
    \vspace{-20pt}
    \caption{NDCG@10 versus dataset size after one epoch of training using the TGNv2 model. For greater consistency, we also include Default-X configurations, which show vanilla model training for X epochs. For trade X = 4, for genre, reddit, and token X = 2. Lighter curves correspond to different seeds, while darker ones reflect the averaged curve behaviour.}
    \label{fig:dataset_size}
\end{figure}

Figure~\ref{fig:dataset_size} evaluates robustness to dataset scale by training on 5-100\% subsamples. Moving Average consistently achieves the highest NDCG@10 across all sizes and datasets, with gains over Default ranging from 10-50\% depending on the dataset. Persistent Forecast provides intermediate improvements but consistently trails MA by 2-8\%. Importantly, the relative performance advantage of aggregation methods remains stable across dataset sizes—MA outperforms baselines equally well on 25\% subsamples and full datasets. This demonstrates that the method's efficacy stems from better exploitation of sparse supervision through temporal consistency, rather than from absolute data scale. Even on heavily subsampled tgbn-token (10\% data), MA achieves 0.30 NDCG@10 compared to 0.15 for Default-1, confirming that historical label aggregation effectively addresses supervision sparsity independent of dataset size.

\paragraph{Dependence of Moving Average on Window Size}


\begin{figure}[t!]
    \centering
    \begin{subfigure}[t]{0.45\linewidth}
        \centering
        \includegraphics[width=\linewidth]{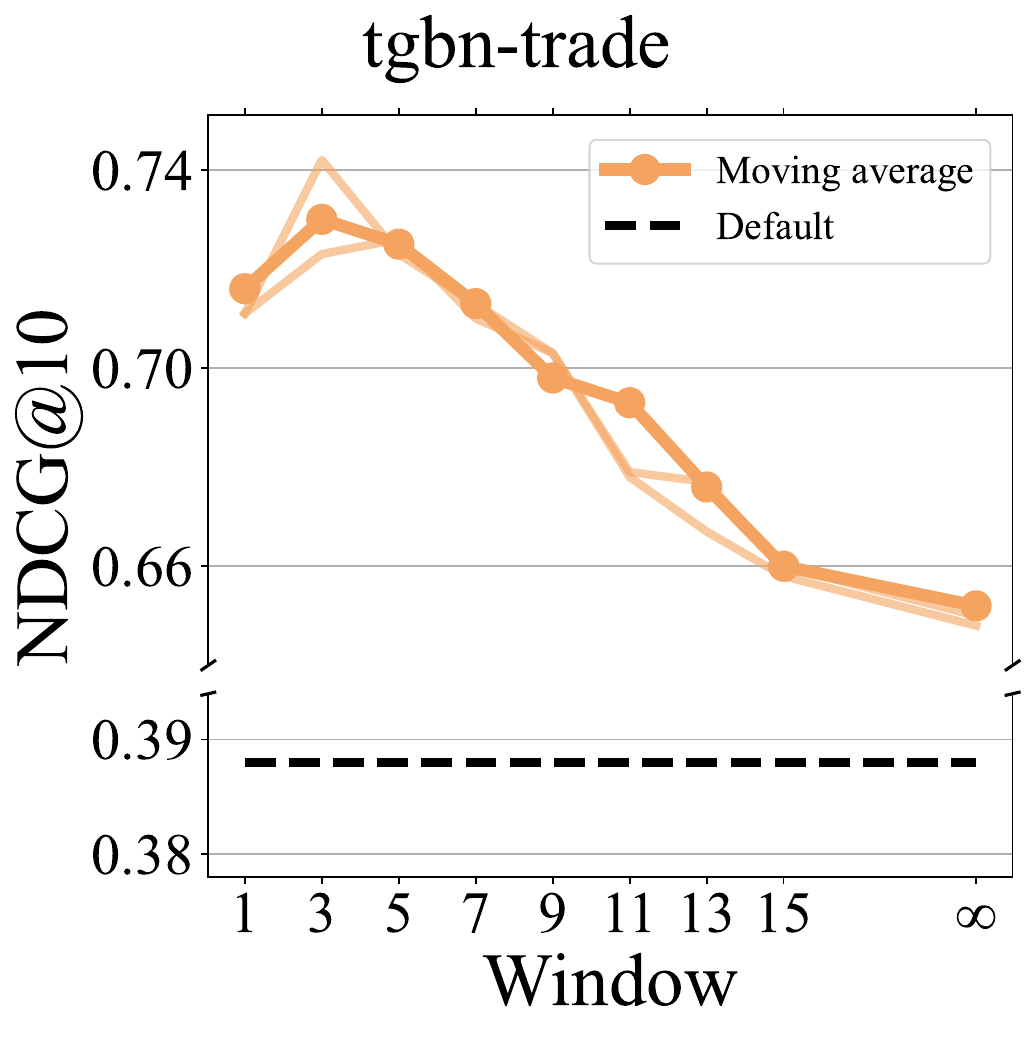}
        \label{fig:window_trade}
    \end{subfigure}
    \hfill
    \begin{subfigure}[t]{0.45\linewidth}
        \centering
        \includegraphics[width=\linewidth]{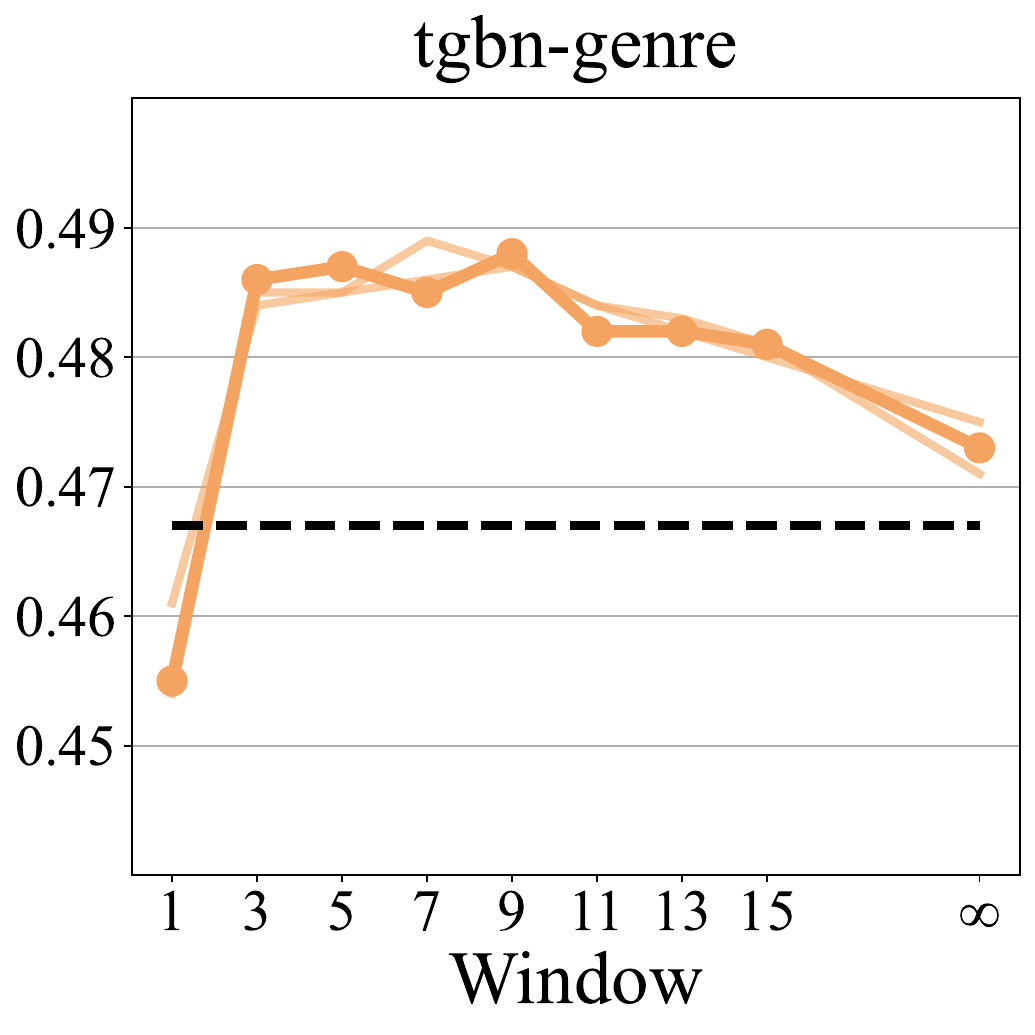}
        \label{fig:window_genre}
    \end{subfigure}
    \vfill
    \vspace{-10pt}
    \begin{subfigure}[t]{0.45\linewidth}
        \centering
        \includegraphics[width=\linewidth]{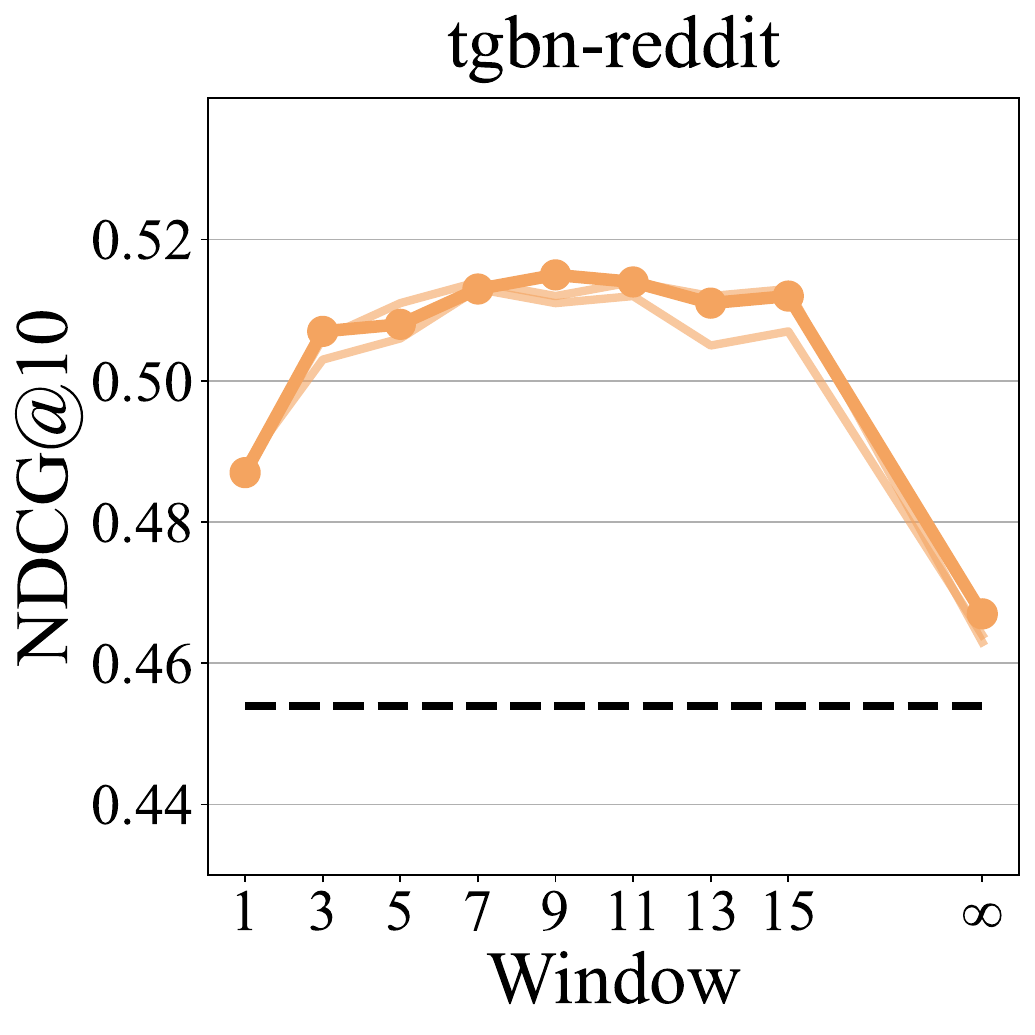}
        \label{fig:window_reddit}
    \end{subfigure}
    \hfill
    \begin{subfigure}[t]{0.45\linewidth}
        \centering
        \includegraphics[width=\linewidth]{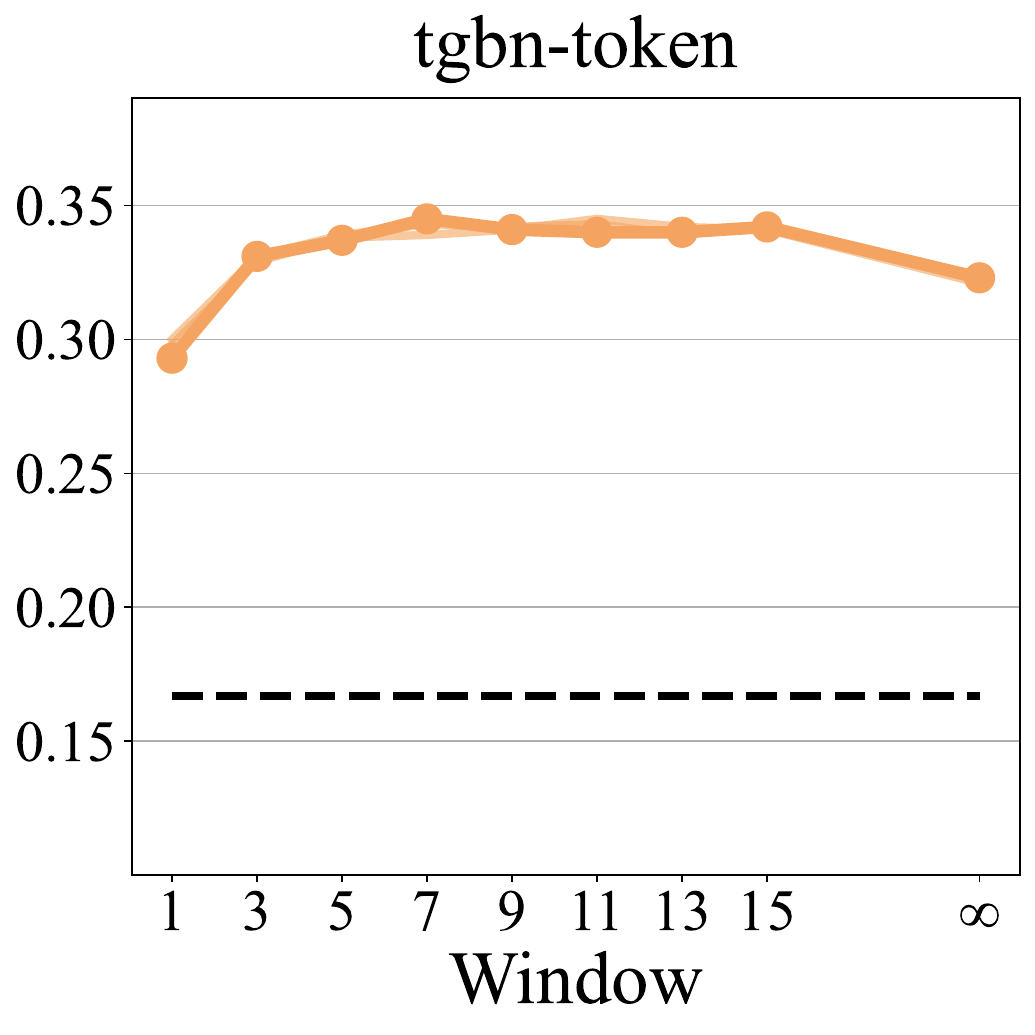}
        \label{fig:window_token}
    \end{subfigure}
    \vspace{-20pt}
    \caption{NDCG@10 versus Moving Average window size after one epoch of training using the TGNv2 model. Dashed lines indicate default TGNv2 performance without pseudo-labels.}
    \label{fig:window}
\end{figure}


The Moving Average (MA) aggregation strategy balances temporal responsiveness and stability through its window size \( w \). Small windows prioritize recent interactions, enabling rapid adaptation but amplifying noise, while large windows emphasize historical trends, stabilizing predictions but potentially delaying response to shifts. Notably, \( w=1 \) corresponds to the Persistent Forecast (PF) strategy, which simply reuses the most recent observed label. At the other extreme, we also evaluate \( w = \infty \) that roughly aggregates all past targets with equal weight.

Figure~\ref{fig:window} shows NDCG@10 versus window size across datasets after one epoch of training. For all considered window sizes and datasets, MA outperforms the Default approach. Performance initially improves with window size as pseudo-labels integrate sufficient context, then plateaus or declines as over-smoothing occurs.

Optimal windows vary significantly by dataset, reflecting differences in temporal dynamics and inherent robustness to aggregation. Trade networks (tgbn-trade) favor short windows (\( w=3 \)), suggesting volatile, rapidly changing interaction patterns. Music genre preferences (tgbn-genre) achieve peak performance across a broader range (\( w \in [3, 9] \)), indicating more stable user tastes that benefit from moderate historical context. User-subreddit activity (tgbn-reddit) performs best with \( w=9 \), reflecting gradual preference evolution in community engagement. Cryptocurrency transactions (tgbn-token) show optimal performance for \( w \in [3, 15] \), demonstrating notable robustness to window size selection.
Overall, selection of window size $7$ can be recommended as the default setting, as it shows suboptimal performance for all cases and allows to improve results over no-aggregation baselines.




\paragraph{Noise factor}

\begin{figure}[t!]
    \centering
    \begin{subfigure}[t]{0.45\linewidth}
        \centering
        \includegraphics[width=\linewidth]{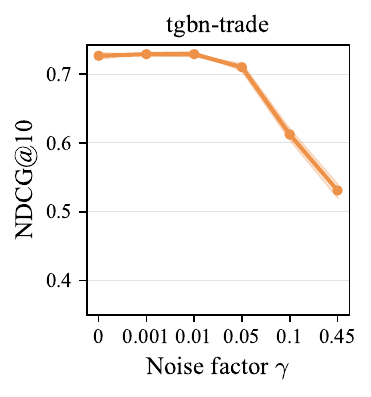}
        \label{fig:noise_trade}
    \end{subfigure}
    \hfill
    \begin{subfigure}[t]{0.45\linewidth}
        \centering
        \includegraphics[width=\linewidth]{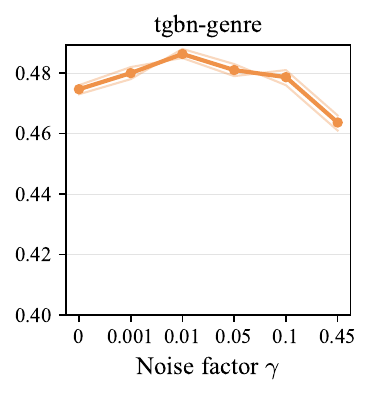}
        \label{fig:noise_genre}
    \end{subfigure}
    \vfill
    \vspace{-10pt}
    \begin{subfigure}[t]{0.45\linewidth}
        \centering
        \includegraphics[width=\linewidth]{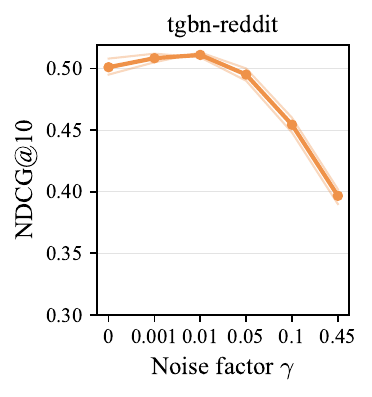}
        \label{fig:noise_reddit}
    \end{subfigure}
    \hfill
    \begin{subfigure}[t]{0.45\linewidth}
        \centering
        \includegraphics[width=\linewidth]{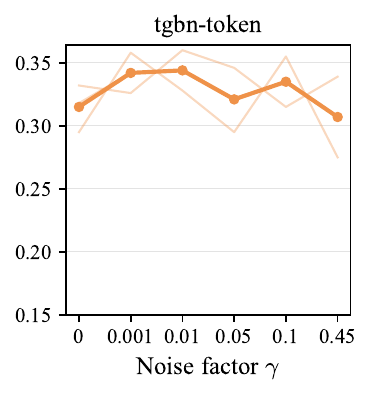}
        \label{fig:noise_token}
    \end{subfigure}
    \vspace{-20pt}
    \caption{NDCG@10 versus noise factor after one epoch of training using the TGNv2 model.}
    \label{fig:noise}
\end{figure}

As defined in Eq.~\eqref{eq:ydef}, the pseudo-label is perturbed by an additive term $\gamma \cdot \epsilon$, where $\gamma \geq 0$ is a scaling factor and $\epsilon$ is a zero-mean random vector that keeps $\tilde{\vecY}_t^{(v)}$ on the probability simplex. Figure~\ref{fig:noise} demonstrates that the method maintains stability across varying $\gamma$, with performance degrading gracefully as noise increases rather than suffering catastrophic failure.

\paragraph{Target shuffle}

Table~\ref{tab:shuffle} demonstrates a comprehensive target shuffling ablation study. When target chronology is randomized, all aggregation methods experience substantial performance degradation across datasets, with absolute NDCG@10 reductions ranging from 4\% to 60\%. This validates the fundamental premise that temporal consistency in node preferences is essential for effective pseudo-labeling in dynamic graphs.

\begin{table}[t]
  \caption{NDCG@10 Test under original and shuffled training labels.}
  \vspace{-10pt}
  \label{tab:shuffle}
  \centering
  \renewcommand{\arraystretch}{0.75}
  \begin{tabular}{llccc}
      \toprule
      Dataset & Strategy & Original & Shuffled & Drop (\%) \\
      \midrule
      \multirow{3}{*}{tgbn-trade}  & Default & 0.735 & 0.662 & $-9.9$  \\
                                   & MA      & 0.729 & 0.688 & $-5.6$  \\
                                   & PF      & 0.710 & 0.647 & $-8.9$  \\
      \midrule
      \multirow{3}{*}{tgbn-genre}  & Default & 0.469 & 0.430 & $-8.3$  \\
                                   & MA      & 0.486 & 0.464 & $-4.5$  \\
                                   & PF      & 0.457 & 0.424 & $-7.2$  \\
      \midrule
      \multirow{3}{*}{tgbn-reddit} & Default & 0.507 & 0.346 & $-31.8$ \\
                                   & MA      & 0.511 & 0.378 & $-26.0$ \\
                                   & PF      & 0.487 & 0.326 & $-33.1$ \\
      \midrule
      \multirow{3}{*}{tgbn-token}  & Default & 0.294 & 0.121 & $-58.7$ \\
                                   & MA      & 0.344 & 0.164 & $-52.4$ \\
                                   & PF      & 0.298 & 0.203 & $-31.8$ \\
      \bottomrule
  \end{tabular}
\end{table}

\paragraph{Controlled sparsity study.}

To quantify the robustness of MA under varying levels of supervision, we artificially remove a fraction of training targets. We choose tgbn-genre as it has the highest target density, which makes controlled removal meaningful. We consider two removal schemes: random, which drops a uniform fraction of targets, and bursty, which removes the most recent targets to simulate an abrupt temporal gap between training and evaluation.

Table~\ref{tab:sparsity} reports NDCG@10 across retention levels. The gain from MA increases monotonically with fewer retained targets, confirming that history-based pseudo-supervision uses the remaining signal more efficiently. Also default degrades sharply under the bursty scheme while MA is largely unaffected, indicating that pseudo-labels transfer well across temporal shifts in the label distribution.

\begin{table}[t]
  \caption{Controlled sparsity study on tgbn-genre. NDCG@10 Test under
  random and bursty target removal at varying retention levels.}
  \vspace{-10pt}
  \label{tab:sparsity}
  \centering
  \renewcommand{\arraystretch}{0.75}
  \begin{tabular}{lcccc}
      \toprule
      \multirow{2}{*}{Retain} & \multicolumn{2}{c}{Random} & \multicolumn{2}{c}{Bursty} \\
      \cmidrule(lr){2-3} \cmidrule(lr){4-5}
              & Default & MA & Default & MA \\
      \midrule
      100\%   & 0.473 & \textbf{0.482} & 0.473 & \textbf{0.482} \\
       50\%   & 0.468 & \textbf{0.478} & 0.439 & \textbf{0.469} \\
       25\%   & 0.461 & \textbf{0.474} & 0.407 & \textbf{0.464} \\
       10\%   & 0.444 & \textbf{0.462} & 0.379 & \textbf{0.455} \\
        5\%   & 0.431 & \textbf{0.453} & 0.357 & \textbf{0.454} \\
      \bottomrule
  \end{tabular}
\end{table}

\section{Conclusion}
\label{sec:conclusion}

We addressed a fundamental bottleneck in training Temporal Graph Networks (TGNs): the inefficiency caused by sparse supervision in real-world temporal data. Motivated by the observation that node-level preferences evolve gradually over time, we propose a simple yet effective pseudo-labeling strategy rooted in temporal aggregation of past labels. Our approach—comprising persistent forecasting and moving averages—requires no additional parameters, incurs negligible computational overhead, and is agnostic to the underlying model architecture.

By employing pseudo-labeled batches, we convert otherwise idle training steps into meaningful updates, significantly accelerating convergence. Our theoretical analysis supports these findings, demonstrating that leveraging multiple plausible labels through historical aggregation improves the convergence speed. 


Empirical results on the Temporal Graph Benchmark confirm the practical benefits of our method: on average 6 \( \times \) faster convergence with improved predictive performance when applied to the state-of-the-art TGNv2 model. Our approach not only accelerates training but also enhances model quality, achieving higher NDCG@10 scores compared to vanilla training across all benchmark datasets. These findings highlight the untapped potential of temporal consistency as a supervisory signal, offering a new direction for efficient learning on dynamic graphs.

Future work may explore adaptive aggregation strategies that learn optimal temporal weighting schemes or extend our method to other tasks such as link prediction or temporal graph completion. Ultimately, our approach contributes a lightweight yet powerful tool for overcoming supervision sparsity in temporal graph learning.

\section{Acknowledgments}

The research was supported by the Russian Science Foundation grant No. 25-11-00355

\bibliographystyle{ACM-Reference-Format}
\balance
\bibliography{sample-base}

@article{wang2021tcl,
    title={{TCL}: Transformer-based Dynamic Graph Modelling via Contrastive Learning},
    author={Wang, Lu and Chang, Xiaofu and Li, Shuang and Chu, Yunfei and Li, Hui and Zhang, Wei and He, Xiaofeng and
  Song, Le and Zhou, Jingren and Yang, Hongxia},
    journal={arXiv preprint arXiv:2105.07944},
    year={2021}
  }

@book{mohri2018foundations,
  title={Foundations of machine learning},
  author={Mohri, Mehryar and Rostamizadeh, Afshin and Talwalkar, Ameet},
  year={2018},
  publisher={MIT press}
}

@inproceedings{yan2018unified,
  title={A unified analysis of stochastic momentum methods for deep learning},
  author={Yan, Yan and Yang, Tianbao and Li, Zhe and Lin, Qihang and Yang, Yi},
  booktitle={Proceedings of the 27th International Joint Conference on Artificial Intelligence},
  pages={2955--2961},
  year={2018}
}

@inproceedings{li2024sequential,
  title={Sequential and Repetitive Pattern Learning for Temporal Knowledge Graph Reasoning},
  author={Li, Xuefei and Zhou, Huiwei and Yao, Weihong and Li, Wenchu and Lin, Yingyu and Du, Lei},
  booktitle={Proceedings of the 2024 Joint International Conference on Computational Linguistics, Language Resources and Evaluation (LREC-COLING 2024)},
  pages={14744--14754},
  year={2024}
}

@article{li2023informative,
  title={Informative pseudo-labeling for graph neural networks with few labels},
  author={Li, Yayong and Yin, Jie and Chen, Ling},
  journal={Data Mining and Knowledge Discovery},
  volume={37},
  number={1},
  pages={228--254},
  year={2023},
  publisher={Springer}
}

@article{zhan2021mutual,
  title={Mutual teaching for graph convolutional networks},
  author={Zhan, Kun and Niu, Chaoxi},
  journal={Future Generation Computer Systems},
  volume={115},
  pages={837--843},
  year={2021},
  publisher={Elsevier}
}

@book{zhu2005semi,
  title={Semi-supervised learning with graphs},
  author={Zhu, Xiaojin},
  year={2005},
  publisher={Carnegie Mellon University}
}

@article{wang2020unifying,
  title={Unifying graph convolutional neural networks and label propagation},
  author={Wang, Hongwei and Leskovec, Jure},
  journal={arXiv preprint arXiv:2002.06755},
  year={2020}
}

@article{huang2023temporal,
  title={Temporal graph benchmark for machine learning on temporal graphs},
  author={Huang, Shenyang and Poursafaei, Farimah and Danovitch, Jacob and Fey, Matthias and Hu, Weihua and Rossi, Emanuele and Leskovec, Jure and Bronstein, Michael and Rabusseau, Guillaume and Rabbany, Reihaneh},
  journal={Advances in Neural Information Processing Systems},
  volume={36},
  pages={2056--2073},
  year={2023}
}

@inproceedings{kumar2019predicting,
  title={Predicting dynamic embedding trajectory in temporal interaction networks},
  author={Kumar, Srijan and Zhang, Xikun and Leskovec, Jure},
  booktitle={Proceedings of the 25th ACM SIGKDD international conference on knowledge discovery \& data mining},
  pages={1269--1278},
  year={2019}
}

@inproceedings{tjandra2024tgnv2,
    title={Enhancing the Expressivity of Temporal Graph Networks through Source-Target Identification},
    author={Tjandra, Benedict Aaron and Barbero, Federico and Bronstein, Michael},
    booktitle={NeurIPS 2024 Workshop on Symmetry and Geometry in Neural Representations (NeurReps)},
    year={2024},
    eprint={2411.03596},
    archivePrefix={arXiv},
    primaryClass={cs.LG}
  }

@article{jarvelin2002cumulated,
  title={Cumulated gain-based evaluation of IR techniques},
  author={J{\"a}rvelin, Kalervo and Kek{\"a}l{\"a}inen, Jaana},
  journal={ACM Transactions on Information Systems (TOIS)},
  volume={20},
  number={4},
  pages={422--446},
  year={2002},
  publisher={ACM New York, NY, USA}
}

@inproceedings{deng2019learning,
  title={Learning dynamic context graphs for predicting social events},
  author={Deng, Songgaojun and Rangwala, Huzefa and Ning, Yue},
  booktitle={Proceedings of the 25th ACM SIGKDD international conference on knowledge discovery \& data mining},
  pages={1007--1016},
  year={2019}
}

@inproceedings{song2019session,
  title={Session-based social recommendation via dynamic graph attention networks},
  author={Song, Weiping and Xiao, Zhiping and Wang, Yifan and Charlin, Laurent and Zhang, Ming and Tang, Jian},
  booktitle={Proceedings of the Twelfth ACM international conference on web search and data mining},
  pages={555--563},
  year={2019}
}

@article{zhao2019t,
  title={{T-GCN}: A temporal graph convolutional network for traffic prediction},
  author={Zhao, Ling and Song, Yujiao and Zhang, Chao and Liu, Yu and Wang, Pu and Lin, Tao and Deng, Min and Li, Haifeng},
  journal={IEEE transactions on intelligent transportation systems},
  volume={21},
  number={9},
  pages={3848--3858},
  year={2019},
  publisher={IEEE}
}

@inproceedings{tgn_icml_grl2020,
    title={Temporal Graph Networks for Deep Learning on Dynamic Graphs},
    author={Emanuele Rossi and Ben Chamberlain and Fabrizio Frasca and Davide Eynard and Federico 
    Monti and Michael Bronstein},
    booktitle={ICML 2020 Workshop on Graph Representation Learning},
    year={2020}
}

@inproceedings{klenitskiy2024does,
  title={Does It Look Sequential? An Analysis of Datasets for Evaluation of Sequential Recommendations},
  author={Klenitskiy, Anton and Volodkevich, Anna and Pembek, Anton and Vasilev, Alexey},
  booktitle={Proceedings of the 18th ACM Conference on Recommender Systems},
  pages={1067--1072},
  year={2024}
}

@inproceedings{cai2018modeling,
  title={Modeling sequential online interactive behaviors with temporal point process},
  author={Cai, Renqin and Bai, Xueying and Wang, Zhenrui and Shi, Yuling and Sondhi, Parikshit and Wang, Hongning},
  booktitle={Proceedings of the 27th ACM international conference on information and knowledge management},
  pages={873--882},
  year={2018}
}

@inproceedings{xu2020tgat,
    title={Inductive Representation Learning on Temporal Graphs},
    author={Xu, Da and Ruan, Chuanwei and Korpeoglu, Evren and Kumar, Sushant and Achan, Kannan},
    booktitle={International Conference on Learning Representations (ICLR)},
    year={2020}
  }

@inproceedings{yu2023towards,
    title={Towards Better Dynamic Graph Learning: New Architecture and Unified Library},
    author={Le Yu and Leilei Sun and Bowen Du and Weifeng Lv},
    booktitle={Thirty-seventh Conference on Neural Information Processing Systems},
    year={2023},
    
}

@inproceedings{cong2023graphmixer,
    title={Do We Really Need Complicated Model Architectures for Temporal Networks?},
    author={Cong, Weilin and Zhang, Si and Kang, Jian and Yuan, Baichuan and Wu, Hao and Zhou, Xin and Tong, Hanghang
  and Mahdavi, Mehrdad},
    booktitle={International Conference on Learning Representations (ICLR)},
    year={2023}
  }

@inproceedings{trivedi2018dyrep,
    title={DyRep: Learning Representations over Dynamic Graphs},
    author={Rakshit Trivedi and Mehrdad Farajtabar and Prasenjeet Biswal and Hongyuan Zha},
    booktitle={International Conference on Learning Representations},
    year={2019},
}

@inproceedings{wang2021caw,
    title={Inductive Representation Learning in Temporal Networks via Causal Anonymous Walks},
    author={Wang, Yanbing and Chang, Yen-Yu and Liu, Yunyu and Leskovec, Jure and Li, Pan},
    booktitle={International Conference on Learning Representations (ICLR)},
    year={2021}
  }

@article{Robbins1951,
  author  = {Herbert Robbins and Sutton Monro},
  title   = {A Stochastic Approximation Method},
  journal = {The Annals of Mathematical Statistics},
  year    = {1951},
  volume  = {22},
  number  = {3},
  pages   = {400--407},
  doi     = {10.1214/aoms/1177729586}
}

@article{Bottou2018,
  author  = {L\'{e}on Bottou and Frank E.~Curtis and Jorge Nocedal},
  title   = {Optimization Methods for Large-Scale Machine Learning},
  journal = {SIAM Review},
  year    = {2018},
  volume  = {60},
  number  = {2},
  pages   = {223--311},
  doi     = {10.1137/16M1080173}
}

@article{Bubeck2015,
  author  = {S\'{e}bastien Bubeck},
  title   = {Convex Optimization: Algorithms and Complexity},
  journal = {Foundations and Trends\textsuperscript{\textregistered} in Machine Learning},
  year    = {2015},
  volume  = {8},
  number  = {3--4},
  pages   = {231--357},
  doi     = {10.1561/2200000050}
}

@inproceedings{rakhlin2012making,
  title={Making gradient descent optimal for strongly convex stochastic optimization},
  author={Rakhlin, Alexander and Shamir, Ohad and Sridharan, Karthik},
  booktitle={Proceedings of the 29th International Coference on International Conference on Machine Learning},
  pages={1571--1578},
  year={2012}
}

@book{shalev2014understanding,
  title={Understanding machine learning: From theory to algorithms},
  author={Shalev-Shwartz, Shai and Ben-David, Shai},
  year={2014},
  publisher={Cambridge university press}
}

@inproceedings{shamir013,
  author       = {Ohad Shamir and
                  Tong Zhang},
  title        = {Stochastic Gradient Descent for Non-smooth Optimization: Convergence
                  Results and Optimal Averaging Schemes},
  booktitle    = {Proceedings of the 30th International Conference on Machine Learning,
                  {ICML} 2013, Atlanta, GA, USA, 16-21 June 2013},
  series       = {{JMLR} Workshop and Conference Proceedings},
  volume       = {28},
  pages        = {71--79},
  publisher    = {JMLR.org},
  year         = {2013},
  timestamp    = {Wed, 29 May 2019 08:41:45 +0200},
  biburl       = {https://dblp.org/rec/conf/icml/Shamir013.bib},
  bibsource    = {dblp computer science bibliography, https://dblp.org}
}

@inproceedings{you2018imagenet,
  title={{ImageNet} training in minutes},
  author={You, Yang and Zhang, Zhao and Hsieh, Cho-Jui and Demmel, James and Keutzer, Kurt},
  booktitle={Proceedings of the 47th international conference on parallel processing},
  pages={1--10},
  year={2018}
}

@inproceedings{lee2013pseudo,
  title={Pseudo-label: The simple and efficient semi-supervised learning method for deep neural networks},
  author={Lee, Dong-Hyun and others},
  booktitle={Workshop on challenges in representation learning, ICML},
  volume={3},
  number={2},
  pages={896},
  year={2013},
  organization={Atlanta}
}

@inproceedings{xie2020self,
  title={Self-training with noisy student improves imagenet classification},
  author={Xie, Qizhe and Luong, Minh-Thang and Hovy, Eduard and Le, Quoc V},
  booktitle={Proceedings of the IEEE/CVF conference on computer vision and pattern recognition},
  pages={10687--10698},
  year={2020}
}

@inproceedings{arazo2020pseudo,
  title={Pseudo-labeling and confirmation bias in deep semi-supervised learning},
  author={Arazo, Eric and Ortego, Diego and Albert, Paul and O’Connor, Noel E and McGuinness, Kevin},
  booktitle={2020 International joint conference on neural networks (IJCNN)},
  pages={1--8},
  year={2020},
  organization={IEEE}
}

@inproceedings{han2023tkgf,
  title={TKGF-NTP: Temporal Knowledge Graph Forecasting via Neural Temporal Point Process},
  author={Han, Gaojie and Chen, Wei and Zhang, Xiaofang and Xu, Jiajie and Liu, An and Zhao, Lei},
  booktitle={2023 IEEE International Conference on Web Services (ICWS)},
  pages={318--328},
  year={2023},
  organization={IEEE}
}

@book{hyndman2018forecasting,
  title={Forecasting: principles and practice},
  author={Hyndman, Rob J and Athanasopoulos, George},
  year={2018},
  publisher={OTexts}
}

@article{ma2021dynaboard,
  title={Dynaboard: An evaluation-as-a-service platform for holistic next-generation benchmarking},
  author={Ma, Zhiyi and Ethayarajh, Kawin and Thrush, Tristan and Jain, Somya and Wu, Ledell and Jia, Robin and Potts, Christopher and Williams, Adina and Kiela, Douwe},
  journal={Advances in Neural Information Processing Systems},
  volume={34},
  pages={10351--10367},
  year={2021}
}

@inproceedings{johnson2013accelerating,
  title={Accelerating stochastic gradient descent using predictive variance reduction},
  author={Johnson, Rie and Zhang, Tong},
  booktitle={Advances in Neural Information Processing Systems},
  pages={315--323},
  year={2013}
}

@article{kingma2015adam,
  title={Adam: A method for stochastic optimization},
  author={Kingma, Diederik P and Ba, Jimmy},
  journal={International Conference on Learning Representations (ICLR)},
  year={2015}
}

@String{Computing = "Computing" }

@String{Computer = "{IEEE} Computer" }

@String{Springer = "Springer-Verlag" }

@ArtifactSoftware{R,
    title = {R: A Language and Environment for Statistical Computing},
    author = {{R Core Team}},
    organization = {R Foundation for Statistical Computing},
    address = {Vienna, Austria},
    year = {2019},
    url = {https://www.R-project.org/},
}

\clearpage
\appendix

\section{Proofs of the theoretical results}
\label{sec:proofs}
For convenience, we repeat below the statements we aim to prove.

\begin{lemma*}
The expectation of $t_{h}$ is $\frac{u}{k}$ and the variance of $t_{h}$ is~$u \frac{k - 1}{k^2 h} + u (1 - u) \frac{1}{k^2}$.    
\end{lemma*}

\begin{proof}
We now prove Lemma~\ref{lemma:HAL_variance}.

By definition, $t_{h}$:
\[
t_{h} = \eta_k \xi, 
\]
where $\xi$ is a Bernoulli random variable $\operatorname{Be}(u)$ and $\eta_k$ is a component of a multinomial random vector with equal probabilities $\frac{1}{k}$ and the total number of observations $h$, divided by $h$, as we aim to match the event type probability.
Thus, it has the binomial distribution with parameters $\operatorname{Binomial} \left(h, \frac{1}{k}\right)$, divided by $h$.
These two random variables are independent.

We'll derive the mean and the variance for the random variable that is the product of a Bernoulli and a Binomial random variable.
Then, we'll scale the results by the coefficient $n$.

Let \(B \sim \operatorname{Bernoulli}(u)\), so \(B \in \{0 , 1\}\) and  
  \(\mathbb{E}[B] = u,\; \operatorname{Var}(B) = u (1 -u)\).
Let \(X \sim \operatorname{Binomial}(h, q) \), so  \( \mathbb{E}[X] = hq,\; \operatorname{Var}(X) = h q (1 -q)\);  
\(B\) and \(X\) are independent.

Define the product random variable:  
\[
Y = B X.
\]
Now let us derive the mean and the variance for \(Y\).

We start with the mean.
Because \( B \) and \( X \) are independent,
\[
\mathbb{E}[Y] = \mathbb{E}[B]\mathbb{E}[X] = u h q.
\]

The second moment for \(Y\) is also easy to derive.
Since \(B^2 = B\), as it takes only $0$ and $1$ values,
\[
Y^{2} = B^{2} X^{2} = B X^{2}.
\]

Again, using independence,
\[
\mathbb{E}[Y^{2}]
  =\mathbb{E}[B] \mathbb{E}[X^{2}]
  = u \mathbb{E}[X^{2}].
\]

For a binomial variable,
\[
\mathbb{E}[X^{2}]
  =\operatorname{Var}(X) +(\mathbb{E}[X])^{2}
  = h q (1 - q) + h^{2} q^{2}.
\]

Hence
\[
\mathbb{E}[Y^{2}]
  = u \bigl[h q (1 - q) + h^{2} q^{2} \bigr].
\]

Now we are ready to obtain the variance of \(Y\).
\[
\operatorname{Var}(Y)
  = \mathbb{E}[Y^{2}] -\bigl(\mathbb{E}[Y]\bigr)^{2}
  = u \bigl[h q (1 - q) + h^{2} q^{2} \bigr] - u^{2} h^{2} q^{2}.
\]

Simplifying this expression, we get:
\[
\operatorname{Var}(Y)
  = u h q (1 - q) + u (1 - u) h^{2} q^{2}.
\]

Going back to our original notation, we get the desired mean and variance:
\begin{align*}
\mathbb{E}[t_h] &= \frac{1}{h} u h \frac{1}{k} = \frac{u}{k}. \\
\operatorname{Var}(t_h) &= \frac{1}{h^2} \left(u h \frac{1}{k} \frac{k - 1}{k} + u (1 - u) h^2 \frac{1}{k^2} \right) = \\
 &= u \frac{k - 1}{k^2 h} + u (1 - u) \frac{1}{k^2}.
\end{align*}

\end{proof}

\begin{theorem*}
We consider SGD in settings from Theorem~\ref{th:sgd_upper}.
The following inequalities for the regret hold for a positive constant $c$ for the constants defined above:    
\begin{itemize}
    \item Under the assumptions of OHL, for the regret $R_T$ it holds:
\[
R_T \leq \textcolor{BurntOrange}{\left( 1 - \frac{u}{k}\right)} \frac{u}{k} \frac{c}{\mu B} \frac{1 + \log T}{T} \leq \frac{u}{k} \frac{c}{\mu B} \frac{1 + \log T}{T}.
\]
\item Under the assumptions of HAL, for the regret $R_T$ it holds:
\begin{align*}
R_T &\leq \textcolor{RoyalBlue}{\left( \frac{k - 1}{k h} + \frac{1 - u}{k} \right)} \frac{u}{k} \frac{c}{\mu B} \frac{1 + \log T}{T} \leq \\
&\leq \textcolor{RoyalBlue}{\frac{2}{\min(h, k)}} \frac{u}{k} \frac{c}{\mu B} \frac{1 + \log T}{T}.
\end{align*}
\end{itemize}
\end{theorem*}

\begin{proof}
We now prove Theorem~\ref{th:main}.

Settings of Theorem~\ref{th:sgd_upper} hold, so the regret is bounded by:
\[
    R_T
    \le
      \frac{17 \sigma^{2}}{\mu B T} \left( 1 + \log T \right).
\]

The variance $\sigma^2$ following the discussion in the main part of the paper has the form:
$\sigma^2 = c \operatorname{Var}(t_h)$.
Taking $\operatorname{Var}(t_h)$ from Lemma~\ref{lemma:HAL_variance},
we get 
$\operatorname{Var}(\sigma^2) = c \left( u \frac{k - 1}{k^2 h} + u (1 - u) \frac{1}{k^2} \right)$.

Plugging the expression for  $\sigma^2$ into a general equation for regret for an arbitrary $h$ and $h = 1$,
we get the first pair of the desired bounds in the Theorem.

Now, by construction 
\begin{align*}
  1 - \frac{u}{k} &\leq 1, \\  
  \frac{k - 1}{k h} + \frac{1 - u}{k} &< \frac{1}{h} + \frac{1}{k} \leq \frac{2}{\min(h, k)}.
\end{align*}
There is a pair of rightmost upper bounds.
\end{proof}

\section{Additional experiments setups}



\paragraph{Source code.} The code to reproduce our experiments can be found at \href{https://github.com/PanyshevAlex/NSB_light}{https://github.com/PanyshevAlex/NSB\_light}

The used hyperparameters for MA are presented in Table~\ref{tab:hyperparameters_mal}.
They follow existing practices for considered datasets.
The main method parameter, window size, can be selected pretty much the same for all problems.

\begin{table}[h]
\centering
\caption{Hyperparameters for TGB experiments with Moving Average Labels (MA), for both TGNv2 and DyRepv2}
\label{tab:hyperparameters_mal}
\addtolength{\tabcolsep}{-3pt}
\begin{tabular}{lcccc}
\hline
 & tgbn-trade & tgbn-genre & tgbn-reddit & tgbn-token \\
\hline
Learning Rate & 1e-3 & 1e-4 & 1e-4 & 1e-4 \\
Batch Size & 200 & 200 & 200 & 200 \\
$d$ & 784 & 784 & 784 & 1024 \\
\( N \) neighbours & 25 & 30 & 30 & 10 \\
Noise factor $\gamma$ & 0.01 & 0.01 & 0.01 & 0.01 \\
Window $w$ & 3 & 7 & 9 & 7 \\
Grad. accum. & 1 & 50 & 25 & 1 \\
\hline
\end{tabular}
\end{table}

\label{sec:datasets}

\section{Additional results}
\label{sec:add_results}

\paragraph{Detailed results}

Table~\ref{tab:big-main-1epoch} demonstrates comprehensive performance comparisons across four TGB datasets, showing both NDCG@10 scores and training times after a single epoch. 


Across all four datasets and both architectures, MA achieves the best one-epoch NDCG@10, surpassing not only the matched-runtime Default-X configurations but also reaching or exceeding the fully-trained Default baseline. 
\eject
The largest absolute gains appear on tgbn-token. PF tracks MA closely on tgbn-trade but lags on the larger benchmarks, confirming that exponentially weighted history is more informative than a single most-recent target.


\begin{table}[h]
    \caption{Quality and efficiency metrics after 1 training epoch. For more consistent comparison, we also include Default-X options, which correspond to the training of the vanilla TGNv2 for X epochs. Results averaged across 3 seeds.}
    \label{tab:big-main-1epoch}
    \renewcommand{\arraystretch}{0.973}
    \addtolength{\tabcolsep}{-2pt}
    \footnotesize
    \centering
\begin{tabular}{lllccl} 
    \toprule
    Dataset & Model & Strategy & \multicolumn{2}{c}{NDCG@10 \( \uparrow \)} & Time(s) \( \downarrow \) \\
    \cmidrule(lr){4-5}
     & &  & Test & Val &   \\
    \midrule
    \multirow{10}{*}{tgbn-trade} & \multirow{5}{*}{TGNv2} & Default-1 & 0.386 \(\pm\) 0.006 & 0.405 \(\pm\) 0.008 & 23 \(\pm\) 2 \\
     & & Default-X & 0.449 \(\pm\) 0.003 & 0.499 \(\pm\) 0.003 & 84 \(\pm\) 4\\
     & & Default   & 0.735 \(\pm\) 0.004 & 0.807 \(\pm\) 0.004 & 945 \( \pm \) 32 \\
     & & PF & 0.710 \( \pm \) 0.003 & 0.800 \( \pm \) 0.001 & 68 \( \pm \) 6 \\
     & & MA & 0.729 \( \pm \) 0.010 & 0.813 \( \pm \) 0.015 & 78 \( \pm \) 15 \\

    \cmidrule(lr){2-6}
     & \multirow{5}{*}{DyRepv2} & Default-1 & 0.387 \(\pm\) 0.014 & 0.403 \(\pm\) 0.016 & 26 \(\pm\) 1 \\
     && Default-X & 0.448 \(\pm\) 0.001 & 0.479 \(\pm\) 0.002 & 102 \(\pm\) 2\\
     && Default & 0.733 \(\pm\) 0.001 & 0.802 \(\pm\) 0.002 & 1043 \(\pm\) 35\\
     && PF & 0.711 \(\pm\) 0.003 & 0.800 \(\pm\) 0.002 & 81 \(\pm\) 16 \\
     && MA & 0.734 \(\pm\) 0.003 & 0.821 \(\pm\) 0.001 & 94 \(\pm\) 18 \\
     
    \midrule
    \multirow{10}{*}{tgbn-genre} & \multirow{5}{*}{TGNv2} & Default-1 & 0.467 \(\pm\) 0.005 & 0.478 \(\pm\) 0.005 & 3544 \(\pm\) 69 \\
     & & Default-X & 0.469 \(\pm\) 0.002 & 0.480 \(\pm\) 0.003  & 7122 \(\pm\) 75 \\
     & & Default   & 0.469 \(\pm\) 0.002 & 0.481 \(\pm\) 0.003 & 10232 \(\pm\) 84 \\
     & & PF & 0.457 \(\pm\) 0.003 & 0.462 \(\pm\) 0.002 & 5182 \(\pm\) 95 \\
     & & MA & 0.486 \(\pm\) 0.003 & 0.490 \(\pm\) 0.001 & 6678 \(\pm\) 103 \\

    \cmidrule(lr){2-6}
     & \multirow{5}{*}{DyRepv2} & Default-1 & 0.470 \(\pm\) 0.005 & 0.479 \(\pm\) 0.006 & 3489 \(\pm\) 77 \\
     && Default-X & 0.473 \(\pm\) 0.002 & 0.481 \(\pm\) 0.001 & 7023 \(\pm\) 102\\
     && Default & 0.473 \(\pm\) 0.002 & 0.481 \(\pm\) 0.001 & 11932 \(\pm\) 122\\
     && PF & 0.453 \(\pm\) 0.003 & 0.460 \(\pm\) 0.002 & 5049 \(\pm\) 78 \\
     && MA & 0.482 \(\pm\) 0.001 & 0.489 \(\pm\) 0.001 & 6982 \(\pm\) 94 \\

    \midrule
    \multirow{10}{*}{tgbn-reddit} & \multirow{5}{*}{TGNv2} & Default-1 & 0.454 \(\pm\) 0.006 & 0.492 \(\pm\) 0.007 & 13564 \(\pm\) 194 \\
     & & Default-X & 0.470 \(\pm\) 0.003 & 0.508 \(\pm\) 0.002 & 27846 \(\pm\) 254 \\
     & & Default & 0.507 \(\pm\) 0.003 & 0.545 \(\pm\) 0.002 & 257834 \(\pm\) 1032 \\
     & & PF & 0.487 \(\pm\) 0.019 & 0.518 \(\pm\) 0.016 & 26345 \(\pm\) 385 \\
     & & MA & 0.511 \(\pm\) 0.011 & 0.543 \(\pm\) 0.006 & 26543 \(\pm\) 283 \\

    \cmidrule(lr){2-6}
    & \multirow{5}{*}{DyRepv2} & Default-1 & 0.453 \(\pm\) 0.006 & 0.489 \(\pm\) 0.006 & 14532 \(\pm\) 226 \\
     && Default-X & 0.472 \(\pm\) 0.001 & 0.506 \(\pm\) 0.001 & 29593 \(\pm\) 295 \\
     && Default & 0.504 \(\pm\) 0.001 & 0.539 \(\pm\) 0.001 & 263096 \(\pm\) 1295 \\
     && PF & 0.487 \(\pm\) 0.011 & 0.519 \(\pm\) 0.008 & 27956 \(\pm\) 402 \\
     && MA & 0.506 \(\pm\) 0.011 & 0.540 \(\pm\) 0.003 & 27425 \(\pm\) 243 \\

    \midrule
    \multirow{10}{*}{tgbn-token} & \multirow{5}{*}{TGNv2} & Default-1 & 0.167 \(\pm\) 0.004 & 0.192 \(\pm\) 0.009 & 19245 \(\pm\) 359 \\
     & & Default-X & 0.197 \(\pm\) 0.003 & 0.214 \(\pm\) 0.004 & 36383 \(\pm\) 412 \\
     & & Default & 0.294 \(\pm\) 0.003 & 0.321 \(\pm\) 0.004 & 172943 \(\pm\) 1052 \\
     & & PF & 0.298 \(\pm\) 0.016 & 0.333 \(\pm\) 0.017 & 36852 \(\pm\) 644 \\
     & & MA & 0.344 \(\pm\) 0.024 & 0.385 \(\pm\) 0.009 & 36960 \(\pm\) 586 \\

    \cmidrule(lr){2-6}
     & \multirow{5}{*}{DyRepv2} & Default-1 & 0.164 \(\pm\) 0.004 & 0.185 \(\pm\) 0.008 & 20149 \(\pm\) 405 \\
     && Default-X & 0.190 \(\pm\) 0.003 & 0.210 \(\pm\) 0.004 & 37932 \(\pm\) 449 \\
     && Default & 0.261 \(\pm\) 0.003 & 0.298 \(\pm\) 0.004 & 178997 \(\pm\) 1167 \\
     && PF & 0.285 \(\pm\) 0.019 & 0.312 \(\pm\) 0.016 & 37859 \(\pm\) 837 \\
     && MA & 0.332 \(\pm\) 0.012 & 0.376 \(\pm\) 0.012 & 38485 \(\pm\) 783 \\
     
    \bottomrule
\end{tabular}
\end{table}


The DyRepv2 results follow the same pattern as TGNv2, with MA again best on every dataset. We highlight that the two architectures, despite different memory-update mechanisms, benefit from MA to similar relative degrees.

Table~\ref{tab:main} shows the performance and efficiency of the considered approaches with and without aggregation until convergence.


On tgbn-trade and tgbn-genre, Default matches MA's quality only at 10–12x the training cost. On the sparser tgbn-reddit and tgbn-token datasets with the lowest supervision density Default plateaus below MA.

Importantly, the training-time gap widens on these datasets, since Default requires many additional epochs to compensate for sparse gradient signal whereas MA produces useful updates from every batch.

\begin{table}[h]
    \caption{Quality and efficiency metrics until convergence. Time measurement spans from training start until the best validation epoch, including evaluation phases. For tgbn-genre, tgbn-reddit, and tgbn-token, 5\% subsamples of the datasets were used. Results averaged across 3 seeds.}
    \label{tab:main}
    \addtolength{\tabcolsep}{-2pt}
    \footnotesize
    \centering
\begin{tabular}{lllccl} 
    \toprule
    Dataset & Model & Strategy & \multicolumn{2}{c}{NDCG@10 \( \uparrow \)} & Time(s) \( \downarrow \) \\
    \cmidrule(lr){4-5}
     & &  & Test & Val &   \\
    \midrule
    \multirow{8}{*}{tgbn-trade} & \multirow{4}{*}{TGNv2} & Default & 0.735 \(\pm\) 0.004 & 0.807 \(\pm\) 0.004 & 945 \(\pm\) 32 \\
     & & PF & 0.710 \(\pm\) 0.003 & 0.800 \(\pm\) 0.001 & 68 \(\pm\) 6 \\
     & & MA & 0.729 \(\pm\) 0.010 & 0.813 \(\pm\) 0.015 & 78 \(\pm\) 15 \\

    \cmidrule(lr){2-6}
     & \multirow{4}{*}{DyRepv2} & Default & 0.733 \(\pm\) 0.001 & 0.802 \(\pm\) 0.002 & 1043 \(\pm\) 35 \\
     & & PF & 0.711 \(\pm\) 0.003 & 0.800 \(\pm\) 0.002 & 81 \(\pm\) 16 \\
     & & MA & 0.734 \(\pm\) 0.003 & 0.821 \(\pm\) 0.001 & 94 \(\pm\) 18 \\
     
    \midrule
    \multirow{8}{*}{tgbn-genre 5\%} & \multirow{4}{*}{TGNv2} & Default & 0.432 \(\pm\) 0.007 & 0.437 \(\pm\) 0.006 & 1071 \(\pm\) 22 \\
     & & PF & 0.424 \(\pm\) 0.006 & 0.430 \(\pm\) 0.005 & 238 \(\pm\) 15 \\
     & & MA & 0.442 \(\pm\) 0.004 & 0.450 \(\pm\) 0.003 & 264 \(\pm\) 12 \\

    \cmidrule(lr){2-6}
     & \multirow{4}{*}{DyRepv2} & Default & 0.427 \(\pm\) 0.009 & 0.432 \(\pm\) 0.008 & 1158 \(\pm\) 28 \\
     & & PF & 0.426 \(\pm\) 0.007 & 0.431 \(\pm\) 0.006 & 252 \(\pm\) 18 \\
     & & MA & 0.439 \(\pm\) 0.005 & 0.443 \(\pm\) 0.004 & 278 \(\pm\) 14 \\
     
    \midrule
    \multirow{8}{*}{tgbn-reddit 5\%} & \multirow{4}{*}{TGNv2} & Default & 0.461 \(\pm\) 0.011 & 0.496 \(\pm\) 0.009 & 29890 \(\pm\) 485 \\
     & & PF & 0.436 \(\pm\) 0.012 & 0.470 \(\pm\) 0.011 & 17710 \(\pm\) 324 \\
     & & MA & 0.465 \(\pm\) 0.009 & 0.502 \(\pm\) 0.007 & 2547 \(\pm\) 88 \\

    \cmidrule(lr){2-6}
     & \multirow{4}{*}{DyRepv2} & Default & 0.456 \(\pm\) 0.013 & 0.491 \(\pm\) 0.011 & 30567 \(\pm\) 512 \\
     & & PF & 0.434 \(\pm\) 0.014 & 0.472 \(\pm\) 0.013 & 18256 \(\pm\) 348 \\
     & & MA & 0.463 \(\pm\) 0.010 & 0.505 \(\pm\) 0.008 & 2398 \(\pm\) 91 \\

    \midrule
    \multirow{8}{*}{tgbn-token 5\%} & \multirow{4}{*}{TGNv2} & Default & 0.297 \(\pm\) 0.013 & 0.312 \(\pm\) 0.015 & 45580 \(\pm\) 892 \\
     & & PF & 0.269 \(\pm\) 0.014 & 0.307 \(\pm\) 0.012 & 14071 \(\pm\) 287 \\
     & & MA & 0.301 \(\pm\) 0.009 & 0.337 \(\pm\) 0.007 & 8910 \(\pm\) 198 \\
     
    \cmidrule(lr){2-6}
     & \multirow{4}{*}{DyRepv2} & Default & 0.289 \(\pm\) 0.016 & 0.305 \(\pm\) 0.017 & 47234 \(\pm\) 923 \\
     & & PF & 0.276 \(\pm\) 0.015 & 0.312 \(\pm\) 0.013 & 13789 \(\pm\) 295 \\
     & & MA & 0.308 \(\pm\) 0.010 & 0.343 \(\pm\) 0.008 & 8654 \(\pm\) 206 \\
     
    \bottomrule
\end{tabular}
\end{table}

\balance

\end{document}